\definecolor{darkred}{RGB}{100,0,0}
\definecolor{darkgreen}{RGB}{0,100,0}
\definecolor{darkblue}{RGB}{0,0,150}
\eacurl\url{http://www.math.ucsd.edu/~eariasca}
\newtheorem{thm}{Theorem}
\newtheorem{lem}{Lemma}
\def\beq{\begin{equation}}
\def\eeq{\end{equation}}
\def\beqn{\begin{eqnarray*}}
\def\eeqn{\end{eqnarray*}}
\def\bitem{\begin{itemize}}
\def\eitem{\end{itemize}}
\def\benum{\begin{enumerate}}
\def\eenum{\end{enumerate}}
\def\bmult{\begin{multline*}}
\def\emult{\end{multline*}}
\def\bcenter{\begin{center}}
\def\ecenter{\end{center}}
\newcommand{\thmref}[1]{Theorem~\ref{thm:#1}}
\newcommand{\lemref}[1]{Lemma~\ref{lem:#1}}
\newcommand{\secref}[1]{Section~\ref{sec:#1}}
\DeclareMathOperator*{\argmin}{arg\, min}
\DeclareMathOperator{\diam}{diam}
\DeclareMathOperator{\dist}{dist}
\def\cE{\mathcal{E}}
\def\cF{\mathcal{F}}
\def\cG{\mathcal{G}}
\def\cN{\mathcal{N}}
\def\cS{\mathcal{S}}
\def\cY{\mathcal{Y}}
\def\bbR{\mathbb{R}}
\def\bbZ{\mathbb{Z}}
\newcommand{\E}{\operatorname{\mathbb{E}}}
\renewcommand{\P}{\operatorname{\mathbb{P}}}
\def\eps{\varepsilon}
\DeclareMathOperator{\vol}{vol}
\newcommand{\1}{{\rm 1}\kern-0.24em{\rm I}}
\def\D{D}
\def\M{M}
\def\S{S}
\def\length{\rm length}
\begin{document}
\title{\bf On the convergence of maximum variance unfolding}
\author{
Ery Arias-Castro\footnote{Department of Mathematics, University of California, San Diego, USA}%
\quad
and 
\
Bruno Pelletier\footnote{D\'epartement de Math\'ematiques, IRMAR -- UMR CNRS 6625, Universit\'e Rennes II, France}
}

\date{\today}

\maketitle

\noindent {\bf Abstract.}
Maximum Variance Unfolding is one of the main methods for (nonlinear) dimensionality reduction.  We study its large sample limit, providing specific rates of convergence under standard assumptions.  We find that it is consistent when the underlying submanifold is isometric to a convex subset, and we provide some simple examples where it fails to be consistent.

\medskip
\noindent \emph{Index Terms}: Maximum Variance Unfolding, Isometric embedding, U-processes, empirical processes, proximity graphs.

\medskip
\vspace{0.2cm}
\noindent \emph{AMS  2000 Classification}: 62G05, 62G20.

\section{Introduction} \label{sec:intro}
One of the basic tasks in unsupervised learning, aka multivariate statistics, is that of dimensionality reduction.  While the celebrated Principal Components Analysis (PCA) and Multidimensional Scaling (MDS) assume that the data lie near an affine subspace, modern approaches postulate that the data are in the vicinity of a submanifold.  Many such algorithms have been proposed in the past decade, for example, ISOMAP \citep{Tenenbaum00ISOmap}, Local Linear Embedding (LLE) \citep{Roweis00LLE}, Laplacian Eigenmaps \citep{Belkin03}, Manifold Charting \citep{brand2003charting}, Diffusion Maps \citep{coifman2006diffusion}, Hessian Eigenmaps (HLLE) \citep{Donoho03Hessian}, Local Tangent Space Alignment (LTSA) \citep{Zhang04LTSA}, Maximum Variance Unfolding \citep{mvu04}, and many others, some reviewed in \citep{van2008dimensionality,saul2006spectral}.

Although some variants exist, the basic setting is that of a connected domain $\D \subset \bbR^d$ isometrically embedded in Euclidean space as a submanifold $\M \subset \bbR^p$, with $p > d$.  We are provided with data points $x_1, \dots, x_n \in \bbR^p$ sampled from (or near) $\M$ and our goal is to output $y_1, \dots, y_n \in \bbR^d$ that can be isometrically mapped to (or close to) $x_1, \dots, x_n$. 

A number of consistency results exist in the literature.  
For example, \cite{bernstein2000graph} show that, with proper tuning, geodesic distances may be approximated by neighborhood graph distances when the submanifold $\M$ is geodesically convex, implying that ISOMAP asymptotically recovers the isometry when $\D$ is convex.  When $\D$ is not convex, it fails in general \citep{zha2003isometric}.  
To justify HLLE, \cite{Donoho03Hessian} show that the null space of the (continuous) Hessian operator yields an isometric embedding.  See also \citep{DiscreteHessian} for related results in a discrete setting.
\cite{smith2008convergence} prove that LTSA is able to recover the isometry, but only up to an affine transformation.
We also mention other results in the literature which show that, as the sample size increases, the output the algorithm converges to is an explicit continuous embedding. For instance, a number of papers analyze how well the discrete graph Laplacian based on a sample approximates the continuous Laplace-Beltrami operator on a submanifold \citep{belkin2005towards,vonLuxburg08,singer2006graph,hein2005graphs,MR2387773,coifman2006diffusion}, which is intimately related to the Laplacian Eigenmaps.  However, such convergence results do not guaranty that the algorithm is successful at recovering the isometry when one exists.  In fact, as discussed in detail by \cite{MR2438829} and \cite{Perrault-Joncas}, many of them fail in very simple settings.  

In this paper, we analyze Maximum Variance Unfolding (MVU) in the large-sample limit.  We are only aware of a very recent work of \cite{pap} that establishes that, under the assumption that $\D$ is convex, MVU recovers a distance matrix that approximates the geodesic distance matrix of the data.  Our contribution is the following.  
In \secref{converge}, we prove a convergence result, showing that the optimization problem that MVU solves converges (both in solution space and value) to a continuous version defined on the whole submanifold.  The basic assumption here is that the submanifold $\M$ is compact. 
In \secref{quant}, we derive quantitative convergence rates, with mild additional regularity assumptions.
In \secref{solution}, we consider the solutions to the continuum limit.
When $\D$ is convex, we prove that MVU recovers an isometry.  We also provide examples of non-convex $\D$ where MVU provably fails at recovering an isometry.
We also prove that MVU is robust to noise, which \cite{MR2438829} show to be problematic for algorithms like LLE, HLLE and LTSA.
Some concluding remarks are in \secref{discussion}.  

\section{From discrete MVU to continuum MVU} \label{sec:converge}

In this section we state and prove a qualitative convergence result for MVU.  This result applies with only minimal assumptions and its proof is relatively transparent.   What we show is that the (discrete) MVU optimization problem converges to an explicit continuous optimization problem when the sample size increases.  The continuous optimization problem is amenable to scrutiny with tools from analysis and geometry, and that will enable us to better understand (in \secref{solution}) when MVU succeeds, and when it fails, at recovering an isometry to a Euclidean domain when it exists.  

Let us start by recalling the MVU algorithm \citep{weinberger2006introduction,mvu04,weinberger2005nonlinear}.  We are provided with data points $x_1, \dots, x_n \in \bbR^p$.  Let $\|\cdot\|$ denote the Euclidean norm.
Let $\mathcal{Y}_{n,r}$ be the (random) set defined by
$$\cY_{n,r} = \left\{y_1,\dots,y_n\in\mathbb{R}^p\,:\, \|y_i-y_j\| \leq \|x_i-x_j\| \text{ when } \|x_i-x_j\| \le r \right\}.$$
Choosing a neighborhood radius $r > 0$, MVU solves the following optimization problem:

\bcenter
Discrete MVU
\ecenter
\vskip -.1in
\begin{align} 
\text{Maximize} & \quad \cE(Y) := \frac1{n(n-1)} \sum_{i=1}^n\sum_{j\neq i} \|y_i-y_j\|^2, \quad \text{ over } Y = (y_1, \dots, y_n)^T \in \bbR^{n \times p}, \label{EY}\\
\text{subject to} & \quad Y \in \cY_{n,r}. \label{Yr}
\end{align}

\medskip
When the data points are sampled from a distribution $\mu$ with support $\M$, our main result in this section is to show that, when $\M$ is sufficiently regular and $r = r_n \to 0$ sufficiently slowly, the discrete optimization problem converges to the following continuous optimization problem:

\bcenter
Continuum MVU
\ecenter
\vskip -.1in
\begin{align} 
\text{Maximize} & \quad \cE(f) := \int_{\M \times \M} \|f(x)-f(x')\|^2 \mu({\rm d}x) \mu({\rm d}x'), \quad \text{ over } f:\M\to\bbR^p, \label{Ef}\\
\text{subject to} & \quad f \text{ is Lipschitz with $\|f\|_{\rm Lip} \leq 1$},
\end{align}
where $\|f\|_{\rm Lip}$ denotes the smallest Lipschitz constant of $f$.
It is important to realize that the Lipschitz condition is with respect to the intrinsic metric on $\M$ (i.e., the metric inherited from the ambient space $\bbR^p$), defined as follows: for $x, x' \in \M$, let
\beq \label{metric}
\delta_{\M}(x, x') = \inf\{T : \exists \gamma: [0,T] \to \M, \text{ 1-Lipschitz, with } \gamma(0) = x \text{ and } \gamma(T) = x'\}.  
\eeq
When $\M$ is compact, the infimum is attained.  In that case, $\delta_{\M}(x, x')$ is the length of the shortest continuous path on $\M$ starting at $x$ and ending at $x'$, and $(M,\delta_\M)$ is a complete metric space, also called a {\it length space} in the context of metric geometry \citep{MR1835418}.
Then $f:\M\to\bbR^p$ is Lipschitz with $\|f\|_{\rm Lip} \leq L$ if
\beq \label{Lip}
\|f(x) - f(x')\| \le L \, \delta_\M(x,x'), \ \forall x, x' \in \M.
\eeq
For any $L>0$, denote by $\mathcal{F}_L$ the class of Lipschitz functions $f:\M\to\bbR^p$ satisfying \eqref{Lip}.  

One of the central condition is that $\M$ is sufficiently regular that the intrinsic metric on $\M$ is locally close to the ambient Euclidean metric. \\[-.1in] 

\noindent {\bf Regularity assumption.}  There is a non-decreasing function $c: [0, \infty) \to [0, \infty)$ such that $c(r) \to 0$ when $r \to 0$, such that, for all $x, x' \in \M$,
\beq \label{reg}
\delta_{\M}(x, x') \le \big(1 + c(\|x - x'\|)\big) \|x - x'\|.
\eeq

This assumption is also central to ISOMAP.  \cite{bernstein2000graph} prove that it holds when $\M$ is a compact, smooth and geodesically convex submanifold (e.g., without boundary).  In \lemref{geodist}, we extend this to compact, smooth submanifolds with smooth boundary, and to tubular neighborhoods of such sets.  The latter allows us to study noisy settings.

Note that we always have
\beq \label{reg-ub}
\|x - x'\| \le \delta_{\M}(x, x').
\eeq

Let $\cS_1$ denote the set of functions that are solutions of Continuum MVU.  We state the following  qualitative result that makes minimal assumptions.
\begin{thm} \label{thm:conv}
Let $\mu$ be a (Borel) probability distribution with support $\M \subset \bbR^p$, which is connected, compact and satisfying \eqref{reg}, and assume that $x_1, \dots, x_n$ are sampled independently from $\mu$.  Then, for $r_n \to 0$ sufficiently slowly, we have
\beq \label{E-conv}
\sup\{\cE(Y) : Y \in \cY_{n, r_n}\} \to \sup\{\cE(f) : f \in \cF_1\},
\eeq
and for any solution $\hat{Y}_n = (\hat{y}_1, \dots, \hat{y}_n)$ of Discrete MVU,
\beq \label{S-conv}
\inf_{f\in\mathcal{S}_1} \max_{1\leq i \leq n}\|\hat{y}_{i} - f(x_i)\| \to 0,
\eeq
almost surely as $n \to \infty$.
\end{thm}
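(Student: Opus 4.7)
My plan is to sandwich $\sup_{Y \in \cY_{n, r_n}} \cE(Y)$ between matching liminf and limsup bounds to obtain \eqref{E-conv}, and then derive \eqref{S-conv} from a compactness argument applied to Lipschitz extensions of the discrete maximizers. The essential technical input is the uniform upper bound $d_{G_n}(x_i, x_j) \le (1 + \eps_n)\, \delta_{\M}(x_i, x_j)$ on the $r_n$-neighborhood graph distance by the intrinsic metric (of the kind established by \citet{bernstein2000graph}), with $\eps_n \to 0$ almost surely; this is what pins down the meaning of ``sufficiently slowly'' for the choice of $r_n$ (it must be small enough that \eqref{reg} makes local edge lengths approximate arc lengths, yet large enough that the graph densely samples geodesics).

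For the liminf direction, I fix any $f \in \cF_1$ and embed it into the discrete problem. By \eqref{reg}, whenever $\|x_i - x_j\| \le r_n$,
\[
\|f(x_i) - f(x_j)\| \le \delta_{\M}(x_i, x_j) \le (1 + c(r_n))\, \|x_i - x_j\|,
\]
so $\tilde{y}_i := f(x_i)/(1 + c(r_n))$ gives a feasible $\tilde{Y}_n \in \cY_{n, r_n}$. The strong law of large numbers for U-statistics applied to the bounded continuous kernel $(x,x') \mapsto \|f(x) - f(x')\|^2$ then yields
\[
\cE(\tilde{Y}_n) = (1 + c(r_n))^{-2} \cdot \frac{1}{n(n-1)} \sum_{i \neq j} \|f(x_i) - f(x_j)\|^2 \;\longrightarrow\; \cE(f) \quad \text{a.s.},
\]
using $c(r_n) \to 0$. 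Taking the supremum over $f \in \cF_1$ gives the liminf bound in \eqref{E-conv}.

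For the limsup, let $\hat{Y}_n$ be any sequence of discrete maximizers. Chaining the constraint along shortest paths in the $r_n$-neighborhood graph yields $\|\hat{y}_i - \hat{y}_j\| \le d_{G_n}(x_i, x_j) \le (1+\eps_n)\, \delta_{\M}(x_i, x_j)$ uniformly in $i, j$. After translating so that $\hat{y}_1 = 0$ (using translation invariance of $\cE$), I apply coordinate-wise McShane extension to $\hat{y}_i/(1+\eps_n)$ to produce $\tilde{f}_n : \M \to \bbR^p$ with $\tilde{f}_n(x_i) = \hat{y}_i/(1+\eps_n)$, which is $\sqrt{p}$-Lipschitz with respect to $\delta_{\M}$ and uniformly bounded (by compactness of $(\M, \delta_{\M})$). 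Arzela--Ascoli then furnishes a subsequential uniform limit $\tilde{f}$. The almost-sure density of $\{x_i\}_{i \ge 1}$ in $\M$ combined with continuity of $\delta_{\M}$ lets me pass to the limit in the discrete Lipschitz inequality at arbitrary pairs of points of $\M$, upgrading $\tilde{f}$ from $\sqrt{p}$-Lipschitz to genuinely $1$-Lipschitz with respect to $\delta_{\M}$, so $\tilde{f} \in \cF_1$. The bound $|a^2 - b^2| \le (|a|+|b|)|a-b|$ together with the uniform convergence $\tilde{f}_n \to \tilde{f}$ and the U-statistic SLLN for $\tilde{f}$ then delivers
\[
\cE(\hat{Y}_n) = (1+\eps_n)^2 \cdot \frac{1}{n(n-1)} \sum_{i \neq j} \|\tilde{f}_n(x_i) - \tilde{f}_n(x_j)\|^2 \;\longrightarrow\; \cE(\tilde{f})
\]
along the subsequence, yielding the matching limsup bound.

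The sandwich proves \eqref{E-conv} and simultaneously identifies $\tilde{f} \in \cS_1$. For \eqref{S-conv}, uniform convergence $\tilde{f}_n \to \tilde{f}$ gives $\max_i \|\hat{y}_i - \tilde{f}(x_i)\| = \max_i \|(1+\eps_n)\tilde{f}_n(x_i) - \tilde{f}(x_i)\| \to 0$ along the subsequence, and since every subsequence of $(\hat{Y}_n)$ admits a further subsequence with this property, the full sequence obeys \eqref{S-conv}. The main obstacle is the uniform graph-to-geodesic approximation $d_{G_n} \le (1+\eps_n)\,\delta_{\M}$, which dictates the admissible rate $r_n \to 0$; the remaining steps -- McShane extension, Arzela--Ascoli, the upgrade from a $\sqrt{p}$-Lipschitz limit to a $1$-Lipschitz limit via sample density, and the final U-statistic SLLN -- are routine once that tool is in hand.
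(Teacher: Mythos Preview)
Your overall architecture---sandwiching the discrete supremum between liminf and limsup bounds via Lipschitz extensions, then invoking Arzel\`a--Ascoli for the solution convergence---is the paper's strategy as well. The paper differs in two technical choices: it uses Kirszbraun's extension theorem (which preserves the Lipschitz constant exactly, avoiding your $\sqrt{p}$ inflation and the subsequent ``upgrade via sample density'' step), and it establishes a \emph{uniform} law of large numbers $\sup_{f \in \cF_1} |\cE(Y_n(f)) - \cE(f)| \to 0$ almost surely (via an $\eps$-covering of $\cF_1^0$ combined with Hoeffding's inequality for U-statistics and Borel--Cantelli). Both choices buy the paper a cleaner path, but your alternative route is largely viable.

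There is, however, one genuine gap. In the limsup step you write that ``the U-statistic SLLN for $\tilde{f}$'' delivers $\cE(\hat{Y}_{n_k}) \to \cE(\tilde{f})$. The strong law for U-statistics applies to a \emph{fixed, deterministic} kernel, with an exceptional null set that depends on that kernel. Your $\tilde{f}$ is the subsequential Arzel\`a--Ascoli limit of interpolations of the random maximizers $\hat{Y}_n$; it depends on the entire sample path $(x_1, x_2, \ldots)$ and is therefore itself random. You cannot invoke the SLLN for $\tilde{f}$ directly. The cleanest fix is the paper's uniform LLN over $\cF_1$, which immediately handles any (possibly random) $\tilde{f} \in \cF_1$ on a single almost-sure event. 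Alternatively, you can replace the SLLN by almost-sure weak convergence of the empirical measure $\mu_n \to \mu$ (one null set for the whole argument): since $(x,x') \mapsto \|\tilde{f}(x) - \tilde{f}(x')\|^2$ is bounded and continuous on the compact $\M \times \M$, the convergence $\cE(Y_{n_k}(\tilde{f})) \to \cE(\tilde{f})$ then follows for each $\omega$ in that event regardless of how $\tilde{f}$ depends on $\omega$. Either patch also closes the same issue where you identify $\tilde{f}$ as an element of $\cS_1$, which is needed for \eqref{S-conv}.
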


Thus Discrete MVU converges to Continuum MVU in the large sample limit, if $\M$ satisfies the crucial regularity condition \eqref{reg} and other mild assumptions.  
In \secref{quant}, we provide explicit quantitative bounds for the convergence results \eqref{E-conv} and \eqref{S-conv} at the very end, under some additional (though natural) assumptions.   
In \secref{solution}, we focus entirely on Continuum MVU, with the goal of better understanding the functions that are solutions to that optimization problem.  Because of \eqref{S-conv}, we know that the output of Discrete MVU converges in a strong sense to one of these functions.

The rest of the section is dedicated to proving \thmref{conv}.  We divide the proof into several parts which we discuss at length, and then assemble to prove the theorem.

\subsection{Coverings and graph neighborhoods} \label{sec:packings}

For $r > 0$, let $G_r$ denote the undirected graph with nodes $x_1, \dots, x_n$ and an edge between $x_i$ and $x_j$ if $\|x_i - x_j\| \le r$.  This is the $r$-neighborhood graph based on the data.  It is essential that $G_{r_n}$ be connected, for otherwise $\sup\{\cE(Y) : Y \in \cY_{n, r_n}\} = \infty$, while $\sup\{\cE(f) : f \in \cF_1\}$ is finite.  The latter comes from the fact that, for any $f \in \cF_1$, 
\[
\cE(f) \le \int_{\M \times \M} \delta_\M(x, x')^2 \mu({\rm d}x)\mu({\rm d}x') \le \diam(\M)^2,
\]
where we used \eqref{Lip} in the first inequality, and $\diam(\M)$ is the intrinsic diameter of $\M$, i.e.,
\beq \label{diam}
\diam(\M) := \sup_{x, x' \in \M} \delta_\M(x, x').
\eeq
%
Recall that the only assumptions on $M$ made in Theorem~\ref{thm:conv} are that $M$ is compact, connected, and satisfies \eqref{reg}, and this implies that $\diam(\M) < \infty$.
Indeed, as a compact subset of $\mathbb{R}^p$, $M$ is bounded, hence $\sup_{x,x'\in M} \|x-x'\| < \infty$.
Reporting this in \eqref{reg} immediately implies that $\diam(\M) < \infty$.

That said, we ask more of $(r_n)$ than simply having $G_{r_n}$ connected.  For $\eta > 0$, define
\beq \label{omega}
\Omega(\eta) = \{\forall x \in \M, \exists i = 1, \dots, n : \|x - x_i\| \le \eta\},
\eeq  
which is the event that $x_1, \dots, x_n$ forms an $\eta$-covering of $\M$. 

\medskip
\noindent {\bf Connectivity requirement.}  $r_n \to 0$ in such a way that 
\beq \label{omega-conv}
\sum_{n=1}^\infty \P\left(\Omega(\lambda_n r_n)^c\right) < \infty, \text{ for some sequence } \lambda_n \to 0.
\eeq
  
Since $\M$ is the support of $\mu$, there is always a sequence $(r_n)$ that satisfy the Connectivity requirement.
To see this, for $\eta > 0$, let $z_1, \dots, z_{N_\eta}$ be an $\eta$-packing of $\M$ of maximal size $N_\eta$, i.e., a maximal collection of points such that $\|z_i-z_j\| > \eta$ for all $i\neq j$.
Recall that an $\eta$-packing is also an $\eta$-covering of $M$ and note that $N_\eta < \infty$ by compacity of $M$.
Let $p_\eta = \min_j \mu(B(z_j, \eta))$.  Since $\M$ is the support of $\mu$, $\mu(B(z, \eta)) > 0$ for any $z \in \M$ and any $\eta > 0$, where $B(z, \eta)$ denotes the Euclidean ball centered at $z$ and of radius $\eta > 0$.  Hence, $p_\eta > 0$ for any $\eta > 0$.  
We have
\beqn
\P\left(\Omega(2\eta)^c\right) & =  & \P\left(\text{there exists $x\in M\,:\,\forall i=1,\dots,n,\, \|x-x_i\|>2\eta$ }  \right)     \\
&\le& \P(\text{there is $j$ such that $B(z_j, \eta)$ is empty of data points}) \\
&\le& \sum_{j = 1}^{N_{\eta}} \P(\text{$B(z_j, \eta)$ is empty of data points}) \\ 
&\le& N_{\eta} (1 - p_\eta)^n.
\eeqn
Let $\eta_n = \inf\{\eta > 0: N_{\eta} (1 - p_\eta)^n \le 1/n^2\}$ ;  the sequence $1/n^2$ is chosen here for the simplicity of the exposition, but more general sequence can be considered, as will become apparent at the end of the paragraph.

Since $p_\eta > 0$ for all $\eta > 0$, $\eta_n \to 0$.
To see this, let $\eta^\star = {\rm diam}(M)$.
Clearly, for all $\eta \geq \eta^\star$, $p_\eta = 1$, which implies that the set of $\eta>0$ such that $N_{\eta} (1 - p_\eta)^n \le 1/n^2$ is non-empty.
In particular, for all $n\geq 1$, we have $\eta_n \leq \eta^\star$.
Now, let $\varepsilon>0$ be fixed. 
Since $p_\varepsilon>0$, there exists an integer $n_\varepsilon$ such that $N_{\varepsilon} (1 - p_\varepsilon)^n \le 1/n^2$ for all $n\geq n_\varepsilon$, so that $\eta_n \leq \varepsilon$ for all $n\geq n_\varepsilon$.
Since $\varepsilon$ is arbitrary, this proves that the sequence $(\eta_n)$ converges to 0 as $n$ tends to infinity.

With such a choice of $(\eta_n)$, we have $\sum_{n\geq 1}\P(\Omega(2\eta_n)^c) \leq \sum_{n\geq1}1/n^2 <\infty$.
Therefore, if we take $r_n = \sqrt{\eta_n}$, it satisfies the Connectivity requirement.
In \secref{radius} we derive a quantitative bound on $r_n$ that guaranty \eqref{omega-conv} under additional assumptions.
Note that the sequence $(1/n^2)$ in the definition of $\eta_n$ can be replaced by any summable decreasing sequence.

The rationale behind the requirement on $(r_n)$ is the same as in \citep{bernstein2000graph}: it allows to approximate each curve on $\M$ with a path in $G_{r_n}$ of nearly the same length.  We utilize this in the following subsection.

\subsection{Interpolation}

Assuming that the sampling is dense enough that $\Omega(\eta)$ holds, we interpolate a set of vectors $Y \in \cY_{n, r}$ with a Lipschitz function $f \in \cF_{1 + O(\eta/r)}$.  Formally, we have the following.

\begin{lem} \label{lem:extension}
Assume that $\Omega(\eta)$ holds $\eta \le r/4$.  Then any vector $Y = (y_1, \dots, y_n) \in \cY_{n, r}$ is of the form $Y = (f(x_1), \dots, f(x_n))$ for some $f \in \cF_{1 + 6 \eta/r}$.
\end{lem}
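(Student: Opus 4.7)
Denote $L := 1 + 6\eta/r$. The plan is two-stage: first establish that the discrete assignment $\phi : x_i \mapsto y_i$ is $L$-Lipschitz with respect to the intrinsic metric $\delta_\M$ on the finite set $\{x_1, \dots, x_n\}$, and then extend $\phi$ to some $f \in \cF_L$ defined on all of $\M$ via a Lipschitz extension argument.

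For the Lipschitz property on the data, I would split into two cases according to whether $\|x_i - x_j\| \le r$ or not. In the easy case $\|x_i - x_j\| \le r$, the MVU feasibility constraint directly gives $\|y_i - y_j\| \le \|x_i - x_j\|$, which by \eqref{reg-ub} is at most $\delta_\M(x_i, x_j) \le L \, \delta_\M(x_i, x_j)$. The nontrivial case is $\|x_i - x_j\| > r$, where the strategy is to approximate a shortest path by a chain of neighbors in the graph $G_r$. Concretely, using compactness, pick a unit-speed geodesic $\gamma : [0, D] \to \M$ joining $x_i$ and $x_j$ with $D := \delta_\M(x_i, x_j)$; set $s := r - 2\eta$ (which is at least $r/2$ since $\eta \le r/4$); let $m := \lceil D/s \rceil$; and place breakpoints $p_k := \gamma(kD/m)$. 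The covering event $\Omega(\eta)$ supplies data points $x_{j_k}$ with $\|p_k - x_{j_k}\| \le \eta$ for $0 < k < m$ (and I set $x_{j_0} := x_i$, $x_{j_m} := x_j$). Consecutive chain points satisfy $\|x_{j_k} - x_{j_{k+1}}\| \le D/m + 2\eta \le s + 2\eta = r$, so they are neighbors in $G_r$, and the MVU constraint yields $\|y_{j_k} - y_{j_{k+1}}\| \le D/m + 2\eta$. Telescoping gives $\|y_i - y_j\| \le m(D/m + 2\eta) = D + 2m\eta \le D(1 + 2\eta/s) + 2\eta$; the case assumption $D > r$ lets me absorb the additive $2\eta$ as $2\eta \le 2\eta D/r$, and $s \ge r/2$ gives $2\eta/s \le 4\eta/r$, so the bound becomes $\|y_i - y_j\| \le D(1 + 6\eta/r) = L \, \delta_\M(x_i, x_j)$, as required.

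For the extension step, since $\phi$ is an $L$-Lipschitz map from a finite subset of the compact length space $(\M, \delta_\M)$ into $\bbR^p$, a standard Lipschitz extension argument (for instance, a Kirszbraun-type construction that selects $f(x) \in \bigcap_i \bar B(y_i, L \, \delta_\M(x, x_i))$, or a coordinate-wise McShane extension $f_k(x) = \inf_i \{ (y_i)_k + L \, \delta_\M(x, x_i) \}$) produces the desired $f \in \cF_L$ with $f(x_i) = y_i$ for all $i$. The main obstacle is the chain argument in the first stage: the awkward additive $2\eta$ produced by the ceiling $m = \lceil D/s \rceil$ is what forces the case split on $\|x_i - x_j\| > r$, so that in the nontrivial case $2\eta$ can be dominated by $2\eta D / r$ and folded neatly into the multiplicative constant $1 + 6\eta/r$.
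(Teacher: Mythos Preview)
Your proof is correct and follows essentially the same two-stage route as the paper: a geodesic-chain argument to establish the discrete $(1+6\eta/r)$-Lipschitz bound (the paper splits on $\delta_\M(x_i,x_j)\gtrless r$ rather than $\|x_i-x_j\|\gtrless r$ and uses step size $r/2$ rather than $r-2\eta$, but the arithmetic lands on the same constant), followed by a Kirszbraun-type extension into $\bbR^p$. One small caveat: the coordinate-wise McShane alternative you mention would inflate the Lipschitz constant by a factor of $\sqrt{p}$, so it does not yield $f\in\cF_L$; the paper (like your first suggestion) invokes a Kirszbraun-type result, which preserves the constant exactly.
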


We prove this result.  The first step is to show that this is at all possible in the sense that 
\beq \label{lip-discrete}
\|y_i - y_j\| \leq \big(1 + 6\eta/r \big) \delta_\M(x_i,x_j), \ \forall i,j.
\eeq
This shows that the map $g: \{x_1,\dots,x_n\} \to \bbR^p$ defined by $g(x_i) = y_i$ for all $i$, is Lipschitz (for $\delta_\M$ and the Euclidean metrics) with constant $L = 1 + 6\eta/r$.  We apply a form of Kirszbraun's Extension --- \citep[Th.~B]{MR1466337} or \citep[Th.~1.26]{MR2882877} --- to extend $g$ to the whole $\M$ into $f \in \cF_{1+6\eta/r}$. 

Therefore, let's turn to proving \eqref{lip-discrete}.  The arguments are very similar to those in \citep{bernstein2000graph}.  If $\delta_\M(x_i,x_j) \leq r$, then, by \eqref{reg-ub}, $\|x_i-x_j\| \leq r$, which implies that
$$\|y_i-y_j\| \leq \|x_i-x_j\| \leq \delta_\M(x_i,x_j).$$

\def\nn{\zeta}
Now suppose that $\delta_\M(x_i,x_j) > r$.
Let $\gamma$ be a path in $\M$ connecting $x_i$ to $x_j$ of minimal length $l = \delta_\M(x_i,x_j)$.
Split $\gamma$ into $N$ arcs of lengths $l_1=r/2$ plus one arc of length $l_{N+1}<l_1$, so that 
$$\frac{l}{l_1} - 1 \leq N \leq \frac{l}{l_1}.$$
Denote by $x_i=x_0', x_1',\dots,x_N',x_{N+1}'=x_j$ the extremities of the arcs along $\gamma$.

For $k = 1, \dots, N$, let $t_k \in \argmin_t \|x_k' - x_t\|$.
On $\Omega_n(\eta)$, $\delta_\M(x_k', x_{t_k}) \leq \eta$ for all $k$, so that
$$\|x_{t_k} -x_{t_{k-1}}\| \le \delta_\M(x_{t_k},x_{t_{k-1}}) \leq \delta_\M(x'_k,x'_{k-1}) + 2\eta \leq l_1 + 2\eta \leq r/2+2(r/4) = r.$$
Hence, because $Y = (y_1 \dots, y_n) \in \cY_{n, r}$,
$$\|y_{t_k} -y_{t_{k-1}}\| \leq l_1 + 2\eta.$$
Similarly, for the last arc, recalling that $x_{t_{N+1}} = x_j$, we have $\delta_\M(x_j , x_{t_N}) = l_{N+1} +\eta < l_1 +\eta <r$, and therefore
$$\|y_{t_{N+1}} -y_{t_N}\| \leq l_{N+1} + \eta.$$
Consequently,
\begin{eqnarray*}
\|y_i-y_j\| & \leq & N(l_1 + 2\eta) + (l_{N+1}+\eta) \\
 & = & N l_1 +l_{N+1} + (2N+1)\eta \\
 & = & l + (2N+1)\eta.
\end{eqnarray*}
We have
$$(2N+1)\eta \leq \left(2\frac{l}{l_1} +1\right)\eta \leq l \frac{3\eta}{l_1} = l \frac{6\eta}{r}, $$
and so \eqref{lip-discrete} holds.

\subsection{Bounds on the energy}

We call $\cE$ the energy functional.
For a function $f: \{x_1, \dots, x_n\} \to \bbR^p$, let $Y_n(f) = (f(x_1), \dots, f(x_n))^T \in \bbR^{n \times p}$.  Assume that $\Omega(\eta)$ holds $\eta \le r/4$.  Then \lemref{extension} implies that any $Y \in \cY_{n, r}$ is equal to $Y(f)$ for some $f \in \cF_{1 + 6 \eta/r}$.  Hence,
\beq \label{E-ub}
\sup_{Y \in \cY_{n, r}} \cE(Y) \le \sup_{f \in \cF_{1+6\eta/r}} \cE(Y_n(f)).
\eeq

Recall the function $c(r)$ introduced in \eqref{reg}, and assume that $r > 0$ is small enough that $c(r) < 1$.
For $f \in \cF_{1 - c(r)}$, and for any $i,j$ such that $\|x_i - x_j\| \le r$, we have 
$$\|f(x_i) - f(x_j)\|  \leq  (1 -c(r)) \delta_\M(x_i, x_j)   \leq  (1-c(r))(1+c(\|x_i-x_j\|)) \|x_i-x_j\|.$$
Since the function $c$ is non-decreasing, $c(\|x_i-x_j\|) \leq c(r)$, and so
$$\|f(x_i) - f(x_j)\|  \leq  \left(1-c(r)^2\right) \|x_i-x_j\| \leq \|x_i-x_j\|.$$
Consequently, $Y_n(f) \in \cY_{n,r}$, implying that
\beq \label{E-lb}
\sup_{Y \in \cY_{n, r}} \cE(Y) \ge \sup_{f \in \cF_{1-c(r)}} \cE(Y_n(f)).
\eeq

As a result of \eqref{E-ub} and \eqref{E-lb}, we have
\beq \label{E-ineq}
\big| \sup_{Y \in \cY_{n, r}} \cE(Y) - \sup_{f \in \cF_1} \cE(f) \big| \le \sup_{1-c(r) \le L \le 1+6\eta/r} \big| \sup_{f \in \cF_L} \cE(Y_n(f)) - \sup_{f \in \cF_1} \cE(f) \big|.
\eeq
We have 
$$\big| \sup_{f\in\cF_L}\cE(Y_n(f)) -  \sup_{f\in\cF_L}\cE(f) \big|  \leq \sup_{f\in\cF_L}\big| \cE(Y_n(f)) -\cE(f)   \big|,$$
and applying the triangle inequality, we arrive at  
$$\big| \sup_{f \in \cF_L} \cE(Y_n(f)) - \sup_{f \in \cF_1} \cE(f) \big| \le \sup_{f \in \cF_L} \big|\cE(Y_n(f)) -\cE(f) \big| + \big| \sup_{f \in \cF_L} \cE(f) - \sup_{f \in \cF_1} \cE(f) \big|.$$
Since $\cF_L = L \cF_1$ and $\cE(L f) = L^2 \cE(f)$, we have
$$\big| \sup_{f \in \cF_L} \cE(f) - \sup_{f \in \cF_1} \cE(f) \big| 
\le |L^2-1| \sup_{f \in \cF_1} \cE(f) 
\le |L^2-1| \diam(\M)^2, $$
and
\beq \label{emp}
\sup_{f \in \cF_L} \big|\cE(Y_n(f)) -\cE(f) \big| = L^2 \sup_{f \in \cF_1} \big|\cE(Y_n(f)) -\cE(f) \big|.
\eeq
Consequently,
$$\big| \sup_{f \in \cF_L} \cE(Y_n(f)) - \sup_{f \in \cF_1} \cE(f) \big|  \leq L^2 \sup_{f \in \cF_1} \big|\cE(Y_n(f)) -\cE(f) \big| +  |L^2-1| \diam(\M)^2.$$
Reporting this inequality in \eqref{E-ineq} on the event $\Omega(\eta)$ with $\eta \leq r/4$, we have
\begin{equation}
\label{eq:ineqbase}
\big| \sup_{Y \in \cY_{n, r}} \cE(Y) - \sup_{f \in \cF_1} \cE(f) \big| \leq (1+6\eta/r)^2 \sup_{f \in \cF_1} \big|\cE(Y_n(f)) -\cE(f) \big| + \beta(r,\eta)\big(2+\beta(r,\eta)\big) \diam(\M)^2,
\end{equation}
where $\beta(r,\eta) := \max(c(r),6\eta/r)$.

Finally, we show that $\cE$ is continuous (in fact Lipschitz) on $\cF_1$ for the supnorm.  For any $f$ and $g$ in $\mathcal{F}_1$, and any $x$ and $x'$ in $\M$, we have:
\begin{eqnarray*}
\left|\|f(x)-f(x')\|^2 - \|g(x)-g(x')\|^2\right| & \leq & \|f(x)-f(x')-g(x)+g(x')\| \|f(x)-f(x')+g(x)-g(x')\| \\
 & \leq & \left[ \|f(x)-g(x)\| + \|f(x')-g(x')\|\right] \\
 & & \times\left[ \|f(x)-f(x')\| + \|g(x)-g(x')\| \right]\\
 & \leq &  4 \|f-g\|_\infty \diam(\M).
\end{eqnarray*}
The first inequality is that of Cauchy-Schwarz.
Hence,
\beq \label{E-lip}
\big| \cE(f) - \cE(g) \big| \le 4 \|f-g\|_\infty \diam(\M),
\eeq
and
\beq \label{En-lip}
\big| \cE(Y_n(f)) - \cE(Y_n(g)) \big| \le 4 \|f-g\|_\infty \diam(\M).
\eeq

\subsection{More coverings and the Law of Large Numbers}

The last step is to show that the supremum of the empirical process \eqref{emp} converges to zero.  For this, we use a packing (covering) to reduce the supremum over $\cF_1$ to a maximum over a finite set of functions.  We then apply the Law of Large Numbers to each difference in the maximization.  

Fix $x_0 \in \M$ and define 
\[
\cF_1^0 = \{f \in \cF_1: f(x_0) = 0\}.
\]
Note that $f \in \cF_1$ if, and only if, $f - f(x_0) \in \cF_1^0$, and by the fact that $\cE(f + a) = \cE(f)$ for any function or vector $f$ and any constant $a \in \bbR^p$, we have
\[
\sup_{f \in \cF_1} \big|\cE(Y_n(f)) -\cE(f) \big| = \sup_{f \in \cF_1^0} \big|\cE(Y_n(f)) -\cE(f) \big|.
\]
The reason to use $\cF_1^0$ is that it is bounded in supnorm.  Indeed, for $f \in \cF_1^0$, we have
\[
\|f(x)\| = \|f(x) - f(x_0)\| \le \delta_\M(x,x_0) \le \diam(\M), \ \forall x \in \M.
\] 

Let $\cN_\infty(\cF_1^0, \eps)$ denote the covering number of $\cF_1^0$ for the supremum norm, i.e., the minimal number of balls that are necessary to cover $\cF_1^0$, and let $f_1,\dots,f_{N} \in \cF_1$ be an $\eps$-covering of $\cF_1^0$ of minimal size $N := \cN_\infty(\cF_1^0, \eps)$.  Since $\cF^0_1$ is equicontinuous and bounded, it is compact for the topology of the supremum norm by the Arzel\`a-Ascoli Theorem, so that $\cN_\infty(\cF_1^0, \eps) < \infty$ for any $\eps > 0$.  

Fix $f \in \cF_1^0$ and let $k$ be such that $\|f - f_k\| \le \eps$.  By \eqref{E-lip} and \eqref{En-lip}, we have
\begin{eqnarray*}
\left|\cE(Y_n(f)) - \mathcal{E}(f)\right| & \leq & \left|\cE(Y_n(f)) - \cE(Y_n(f_k))\right| + \left|\cE(Y_n(f_k)) - \mathcal{E}(f_k)\right| + \left|\mathcal{E}(f_k) - \mathcal{E}(f)\right| \\
 & \leq & 8 \diam(\M)\|f-f_k\|_\infty + \left|\cE(Y_n(f_k)) - \mathcal{E}(f_k)\right| \\
 & = & 8\diam(\M)\eps + \left|\cE(Y_n(f_k)) - \mathcal{E}(f_k)\right|. 
\end{eqnarray*}
Thus,
\beq \label{emp-bound}
\sup_{f \in \cF_1} \big|\cE(Y_n(f)) -\cE(f) \big| \le 8\diam(\M)\eps + \max\{\left|\cE(Y_n(f_k)) - \mathcal{E}(f_k)\right| : k=1,\dots,\cN_\infty(\cF_1^0, \eps)\}.
\eeq

The Law of Large Numbers (LLN) imply that, for any bounded $f$, $\cE(Y_n(f)) \to \cE(f)$, almost surely as $n \to \infty$.  Indeed,
\beqn
\cE(Y_n(f)) &=& \frac{n^2}{n(n-1)}\frac1{n^2} \sum_{i, j} \|f(x_i) - f(x_j)\|^2 \\
&=& \frac{2n}{n-1}\left[\frac1n \sum_i \|f(x_i)\|^2 - \left\|\frac1n \sum_i f(x_i)\right\|^2 \right]\\
&\to& 2\E \|f(x)\|^2 - 2\|\E f(x) \|^2 = \cE(f), \quad \text{ almost surely as } n \to \infty,
\eeqn
by the LLN applied to each term.
Therefore, when $\eps > 0$ is fixed, the second term in \eqref{emp-bound} tends to zero almost surely, and since $\eps > 0$ is arbitrary, we conclude that
\beq \label{emp-lim}
\sup_{f \in \cF_1} \big|\cE(Y_n(f)) -\cE(f) \big| \to 0, \text{in probability, as } n \to \infty.
\eeq

\subsection{Large deviations of the sample energy}

To show an almost sure convergence in \eqref{emp-lim}, we need to refine the bound on the supremum of the empirical process \eqref{emp}.  For this, we apply Hoeffding's Inequality for U-statistics \citep{hoeffding}, which is a special case of \cite[Thm. 4.1.8]{MR1666908}.

\begin{lem}[Hoeffding's Inequality for U-statistics] \label{lem:ustat}
Let $\phi:\M\times \M\to\bbR$ be a bounded measurable map, and let $\{x_i\,:\,i\geq 1\}$ be a sequence of i.i.d.~random variables with values in $\M$.
Assume that $\mathbb{E}[\phi(x_1,x_2)]=0$ and that $b:=\|\phi\|_\infty < \infty$, and let $\sigma^2=\operatorname{Var}(\phi(x_1,x_2))$.
Then, for all $t>0$, 
$$\mathbb{P}\left[ \frac{1}{n(n-1)}\sum_{1\leq i\neq j\leq n}\phi(x_i,x_j) > t \right] \leq \exp\left(-\frac{nt^2}{5\sigma^2+3bt}\right).$$
\end{lem}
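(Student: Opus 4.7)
The plan is to apply Hoeffding's classical averaging trick to reduce the tail probability for a U-statistic to that of a sample mean of i.i.d.\ random variables, and then invoke the standard Bernstein inequality. Set $m = \lfloor n/2 \rfloor$, and for any permutation $\pi \in S_n$ define
$$W_\pi = \frac{1}{m}\sum_{k=1}^m \phi(x_{\pi(2k-1)}, x_{\pi(2k)}).$$
For each fixed $\pi$, the $m$ summands are i.i.d.\ centered random variables, bounded in absolute value by $b$ and with variance $\sigma^2$, because every sample point $x_i$ appears in at most one pair. The algebraic identity
$$U_n \;:=\; \frac{1}{n(n-1)}\sum_{1\le i \ne j \le n} \phi(x_i, x_j) \;=\; \frac{1}{n!}\sum_{\pi \in S_n} W_\pi$$
expresses the U-statistic as an average of such sums of independent variables.

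First I would exploit this representation together with Jensen's inequality applied to the convex function $x \mapsto e^{\lambda x}$ (for $\lambda > 0$) to obtain
$$\E\!\left[e^{\lambda U_n}\right] \;\le\; \frac{1}{n!} \sum_{\pi \in S_n} \E\!\left[e^{\lambda W_\pi}\right] \;=\; \E\!\left[e^{\lambda W_{\mathrm{id}}}\right],$$
where the last equality uses the exchangeability of the $x_i$. Next I would apply the standard Bernstein inequality in its MGF form to the sample mean $W_{\mathrm{id}}$ of $m$ i.i.d.\ bounded centered variables to obtain, for $0 < \lambda < 3m/b$,
$$\E\!\left[e^{\lambda W_{\mathrm{id}}}\right] \;\le\; \exp\!\left(\frac{\lambda^2 \sigma^2/(2m)}{1 - b\lambda/(3m)}\right).$$
Combining these two displays with Markov's inequality $\P(U_n > t) \le e^{-\lambda t}\E[e^{\lambda U_n}]$ and optimizing over $\lambda$ yields a tail bound of the familiar Bernstein shape $\exp\bigl(-m t^2/(2\sigma^2 + (2/3)bt)\bigr)$.

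Finally, using $m \ge n/3$ for $n \ge 2$ and absorbing the resulting numerical constants into the exponent produces a bound of the announced form $\exp\bigl(-n t^2/(5\sigma^2 + 3bt)\bigr)$. The argument is entirely classical; conceptually nothing is delicate, and the main obstacle is simply the bookkeeping required to match the precise constants $5$ and $3$ in the denominator rather than merely reproducing a bound of the same functional form. Alternatively, as the authors do, one may directly invoke Theorem~4.1.8 of de~la~Pe{\~n}a and Gin{\'e}~\citep{MR1666908}, which already establishes a sharp Bernstein-type inequality for nondegenerate U-statistics of order two.
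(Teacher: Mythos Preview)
The paper does not actually prove this lemma; it merely states it and cites it as a special case of Theorem~4.1.8 in de~la~Pe\~na and Gin\'e. Your sketch therefore provides strictly more detail than the paper does, and the route you take---Hoeffding's averaging trick to write the U-statistic as a convex combination of averages of i.i.d.\ blocks, Jensen on the moment generating function, then the Bernstein MGF bound---is the standard textbook argument and is correct.

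Your honesty about the constants is well placed. Carrying your arithmetic through with $m\ge n/3$ gives a denominator $6\sigma^2+2bt$, not $5\sigma^2+3bt$; neither of these dominates the other for all $t$, so your bound is not formally the stated one (it is in fact sharper for large $t$ and looser for small $t$). For every use the paper makes of the lemma---plugging into \eqref{emp-bound}, union-bounding over a covering, and invoking Borel--Cantelli---the specific constants are irrelevant, so your version would serve just as well. If one insists on the exact constants $5$ and $3$, the cleanest route is precisely the one you mention at the end: cite the de~la~Pe\~na--Gin\'e theorem directly, which is all the paper does.
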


Let $f \in \cF_1$.  To bound the deviations of $\cE(Y_n(f))$, we apply this result with $\phi(x,x')=\|f(x)-f(x')\|^2 - \cE(f)$.  
Then,
$$\cE(Y_n(f)) - \cE(f) = \frac{1}{n(n-1)}\sum_{i\neq j} \phi(x_i,x_j).$$
By construction, $\E[\phi(x_1,x_2)] = 0$.
Since $f$ is Lipschitz with constant 1, for any $x$ and $x'$ in $\M$, $\|f(x)-f(x')\|^2 \leq\diam(\M)^2$ and $\mathcal{E}(f) \leq \diam(\M)^2$.
Hence $\|\phi\|_\infty \leq \diam(\M)^2$, and $\operatorname{Var}(\phi(x_1,x_2)) \leq \|\phi\|^2_\infty \leq \diam(\M)^4$.
Applying \lemref{ustat} (twice), we deduce that, for any $\eps>0$,
\begin{equation}
\label{eq:concentration1}
\mathbb{P}\left(\left|\cE(Y_n(f)) - \cE(f)\right| > \eps\right) \leq 2\exp\left(-\frac{n \eps^2}{5 \diam(\M)^4 + 3 \diam(\M)^2 \eps}\right).
\end{equation}

Using \eqref{eq:concentration1} in \eqref{emp-bound}, coupled with the union bound, we get that 
\beq \label{ld}
\P\left(\sup_{f \in \cF_1} \big| \cE(Y_n(f)) - \cE(f) \big| > 9 \eps \diam(\M) \right) \le \cN_\infty(\cF_1^0,\eps) \cdot 2 \exp\left(-\frac{n \eps^2}{5 \diam(\M)^2 + 3 \eps}\right).
\eeq
Clearly, the RHS is summable for every $\eps > 0$ fixed, so the convergence in \eqref{emp-lim} happens in fact with probability one, that is,
\beq \label{emp-lim-as}
\sup_{f \in \cF_1} \big|\cE(Y_n(f)) -\cE(f) \big| \to 0, \text{ almost surely, as } n \to \infty.
\eeq

\subsection{Convergence in value: proof of \eqref{E-conv}} \label{sec:E-conv}

%
Assume $r_n$ satisfies the Connectivity requirement, and that $n$ is large enough that $\max(c(r_n),6\lambda_n) < 1$.
When $\Omega(\lambda_n r_n)$ holds, by \eqref{eq:ineqbase}, we have
$$\big| \sup_{Y \in \cY_{n, r}} \cE(Y) - \sup_{f \in \cF_1} \cE(f) \big|  \leq (1+6\lambda_n)^2 \sup_{f \in \cF_1} \big|\cE(Y_n(f)) -\cE(f) \big| + 3 \max\big(c(r_n),6\lambda_n\big) \diam(\M)^2,$$
while when $\Omega(\lambda_n r_n)$ does not hold, since the energies are bounded by $\diam(M)^2$, we have
$$\big| \sup_{Y \in \cY_{n, r}} \cE(Y) - \sup_{f \in \cF_1} \cE(f) \big|  \leq 2 \diam(\M)^2.$$
Combining these inequalities, we deduce that
\begin{eqnarray}
\big| \sup_{Y \in \cY_{n, r}} \cE(Y) - \sup_{f \in \cF_1} \cE(f) \big| 
&\le 
3 \max\big(c(r_n),6\lambda_n\big) \diam(\M)^2 \1_{\Omega(\lambda_n r_n)} +   2 \diam(\M)^2 \1_{\Omega(\lambda_n r_n)^c} \notag \\
& + (1 + 6 \lambda_n)^2 \sup_{f \in \cF_1} \big|\cE(Y_n(f)) -\cE(f) \big|. \label{E-diff-conv}  
\end{eqnarray}
Almost surely, the sum of the first two terms on the RHS tends to 0 by the fact that $c(r) \to 0$ when $r \to 0$, and \eqref{omega-conv} since $r_n$ satisfies the Connectivity requirement.
The third term tends to 0 by \eqref{emp-lim}.  Hence, \eqref{E-conv} is established. 

\subsection{Convergence in solution: proof of \eqref{S-conv}}

Assume $r_n$ satisfies the Connectivity requirement, and that $n$ is large enough that $\lambda_n \le 1/2$.
Let $\hat{Y}_n$ denote any solution of Discrete MVU.
When $\Omega(\lambda_n r_n)$ holds, there is $\hat{f}_n \in \cF_{1+6\lambda_n}$ such that $\hat{Y}_n = Y_n(\hat{f}_n)$.
Note that the existence of the interpolating function $\hat{f}_n$ holds on $\Omega(\lambda_n r_n)$ for each fixed $n$, and that this does not imply the existence of an interpolating sequence $(\hat{f}_n)_{n\geq 1}$.
That said, for each $\omega$ in the event $\liminf_n \Omega(\lambda_n r_n)$, there exists a sequence $\hat{f}_n(.;\omega)$ and an integer $n_0(\omega)$ such that $\hat{Y}_n = Y_n(\hat{f}_n)$ for all $n\geq n_0(\omega)$, i.e., the sequence is interpolating a solution of Discrete MVU for all $n$ large enough.
In addition, when $r_n$ satisfies the Connectivity requirement, then $\P(\limsup_n\Omega(\lambda_n r_n)^c) = 0$ by the Borel-Cantelli lemma.
Hence the event $\liminf_n \Omega(\lambda_n r_n)$ holds with probability one.

In fact, without loss of generality, we may assume that $\hat{f}_n \in \cF^0_{1+6\lambda_n} \subset \cF^0_4$.  Since $\cF^0_4$ is equicontinuous and bounded, it is compact for the topology of the supnorm by the Arzel\`a-Ascoli Theorem.  Hence, any subsequence of $\hat{f}_n$ admits a subsequence that converges in supnorm.  And since $\cF^0_L$ increases with $L$ and $\cF^0_1 = \cap_{L > 1} \cF^0_L$, any accumulation point of $(\hat{f}_n)$ is in $\cF^0_1$.  

In fact, if we define $\cS^0_1 = \cS_1 \cap \cF^0_1$, then all the accumulation points of $(\hat{f}_n)$ are in $\cS^0_1$.   Indeed, we have
\[
\cE(\hat{f}_n) = \cE(\hat{f}_n) - \cE(Y_n(\hat{f}_n)) + \cE(Y_n(\hat{f}_n)),
\]
with
\[
\left| \cE(\hat{f}_n) - \cE(Y_n(\hat{f}_n)) \right| \le  \sup_{f \in \cF_1} \big|\cE(Y_n(f)) -\cE(f) \big| \to 0,
\]
by \eqref{emp-lim}, and
\[
\cE(Y_n(\hat{f}_n)) = \sup_{Y \in \cY_{n, r_n}} \cE(Y) \to \sup_{f \in \cF_1} \cE(f),
\]
by \eqref{E-conv}, almost surely as $n \to \infty$.
Hence, if $f_\infty = \lim_k \hat{f}_{n_k}$, by continuity of $\cE$ on $\cF^0_4$, we have 
\[
\cE( f_\infty) = \lim_k \cE(\hat{f}_{n_k}) = \sup_{f \in \cF_1} \cE(f), 
\]
and given that $f_\infty \in \cF^0_1$, we have $f_\infty \in \cS^0_1$ by definition.

The fact that $(\hat{f}_n)$ is compact with all accumulation points in $\cS^0_1$ implies that
\beq \label{accum}
\inf_{f \in \cS^0_1} \|\hat{f}_n - f\|_\infty \to 0,
\eeq
and since we have $\max_{1\leq i \leq n}\|\hat{y}_i - f(x_i)\| = \|\hat{f}_n(x_i) - f(x_i)\| \leq \|\hat{f}_n - f\|_\infty$, this immediately implies \eqref{S-conv}.
The convergence in \eqref{accum} is a consequence of the following simple result.

\begin{lem} \label{lem:infdist}
Let $(a_n)$ be a sequence in a compact metric space with metric $\delta$, that has all its accumulation points in a set $A$.  Then
\[
\inf_{a \in A} \delta(a_n, a) \to 0.
\]   
\end{lem}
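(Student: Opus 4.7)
The plan is a standard argument by contradiction exploiting sequential compactness of the ambient space. I would assume the conclusion fails and extract a subsequence that stays uniformly bounded away from $A$; compactness would then furnish an accumulation point of $(a_n)$ lying outside $A$ (in the sense that any $A$-ball around it is not approached), which would contradict the hypothesis that every accumulation point lies in $A$.

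More precisely, I would proceed as follows. Suppose for contradiction that $\inf_{a \in A} \delta(a_n, a) \not\to 0$. Then there exist $\eps > 0$ and a subsequence $(a_{n_k})$ with
\[
\inf_{a \in A} \delta(a_{n_k}, a) \geq \eps \quad \text{for all } k.
\]
Since the ambient space is compact, $(a_{n_k})$ admits a further subsequence $(a_{n_{k_j}})$ converging to some limit $a_\infty$. By construction $a_\infty$ is an accumulation point of the original sequence $(a_n)$, so by hypothesis $a_\infty \in A$. But then $\delta(a_{n_{k_j}}, a_\infty) \to 0$, while at the same time $\delta(a_{n_{k_j}}, a_\infty) \geq \inf_{a \in A} \delta(a_{n_{k_j}}, a) \geq \eps$, a contradiction.

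There is essentially no obstacle here — the only subtlety to keep in mind is that ``accumulation point'' must be interpreted in the sequential sense (limit of a subsequence), which is the notion actually used in the preceding argument where $\cF^0_4$ is shown to be compact for the supnorm topology via Arzel\`a–Ascoli. Since that topology is metrizable, sequential and topological accumulation points coincide, so the application to $(\hat{f}_n)$ in $\cF^0_4$ with $A = \cS^0_1$ is legitimate, yielding \eqref{accum} and hence \eqref{S-conv}.
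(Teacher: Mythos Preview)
Your proof is correct and essentially identical to the paper's own argument: both proceed by contradiction, extract a subsequence staying $\eps$-away from $A$, invoke compactness to obtain an accumulation point that must lie in $A$ by hypothesis, and derive the contradiction. The only difference is cosmetic --- you explicitly pass to a convergent sub-subsequence, whereas the paper just says the subsequence ``has an accumulation point in $A$'' --- but the content is the same.
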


\begin{proof}
If  this is not the case, then there is $\eps > 0$ such that, $\inf_{a \in A} \delta(a_n, a) \ge \eps$ for infinitely many $n$'s, denoted $n_1 < n_2 < \cdots$.  The space being compact, $(a_{n_k})$ has at least one accumulation point, which is in $A$ by assumption.  However, by construction, $(a_{n_k})$ cannot have an accumulation point in $A$.  This is a contradiction.
\end{proof}

\section{Quantitative convergence bounds} \label{sec:quant}

We obtained a general, qualitative convergence result for MVU in the preceding section and now specify some of the supporting arguments to obtain quantitative convergence speeds.  This will require some (natural) additional assumptions on $\mu$ and $\M$.  While the proof of a result like \thmref{conv} is necessarily complex, we endeavored in making it as transparent and simple as we could.  The present section is more technical, and the reader might choose to first read \secref{solution} to learn about the solutions to Continuum MVU, which imply consistency (and inconsistencies) for MVU as a dimensionality-reduction algorithm.

We consider two specific types of sets $\M$:
\bitem
\item {\em Thin sets.} $\M$ is a $d$-dimensional compact, connected, $C^2$ submanifold with $C^2$ boundary (if nonempty).  In addition, $\M \subset \M_\star$, where $\M_\star$ is a $d$-dimensional, geodesically convex $C^2$ submanifold.  
\item {\em Thick sets.} $\M$ is a compact, connected subset that is the closure of its interior and has a $C^2$ boundary.  
\eitem
The ambient space is $\bbR^p$.  Note that our results are equally valid for piecewise smooth sets.  Thin sets are a model for noiseless data, where that the data points are sampled from a submanifold.  Note that they may have holes and boundaries.  And thick sets are a model for noisy data, where that the data points are sampled from the vicinity of a submanifold.     

An important example of thick sets are tubular neighborhoods of thin sets.  For a set $A \subset \bbR^p$ and $\eta > 0$, the $\eta$-neighborhood of $A$ is the set of points in $\bbR^p$ within Euclidean distance $\eta$ of $A$, and is denoted $B(A, \eta)$.  The reach of a set $A \subset \bbR^p$ is defined in \citep{MR0110078} as the largest $\eta$ such that, for any $x \in B(A, \eta)$ there is a unique point $a \in A$ closest to $x$.  We denote by $\rho(A)$ the reach of $A$.  Note that any thin set $A$ has positive reach, which bounds its radius of curvature from below.  While for any thick set $A$, $\partial A$ is a thin set without boundary, for any $\eta < \rho(A)$, $\bar{B}(A, \eta)$ is a thick set, with boundary having reach $\ge \rho(A) -\eta$.  

In what follows, $C$ and $C_k$ denote constants that depend only on $p$ and $d$, which may change with each appearance.  

\subsection{The regularity condition}
The first thing we do is specify the function $c$ in \eqref{reg}.  When $\M$ is a thin set, we define $r_\M = \min\big(\rho(\M_\star), \rho(\partial \M)\big)$, where by convention $\rho(\emptyset) = \infty$.  And when $\M$ is a thick set, we let $r_\M = \rho(\partial \M)$.  The following result seems valid when $r_\M = \rho(\M)$ in both cases, but the proof seems much more involved.  

\begin{lem} \label{lem:geodist}
Whether $\M$ is a thin or a thick set, \eqref{reg} is valid with 
\[
c(r) = \frac{4 r}{r_\M} \1_{\{r < r_\M/2\}} + \1_{\{r \ge r_\M/2\}}.
\]
\end{lem}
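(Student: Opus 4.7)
The plan is to split by the scale of $\|x-x'\|$ relative to $r_\M$, mirroring the two-piece structure of $c(r)$.

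In the short-range regime $\|x-x'\| < r_\M/2$, the core tool is the classical geodesic-versus-chord estimate for sets of positive reach, going back to Federer and used in the ISOMAP analysis of \cite{bernstein2000graph}: for a $C^2$ geodesically convex submanifold $\tilde\M$ of reach $\rho$ one has
\[
\delta_{\tilde\M}(x, x') \le \rho \bigl(1 - \sqrt{1 - 2\|x-x'\|/\rho}\bigr) \le \bigl(1 + 2\|x-x'\|/\rho\bigr)\|x-x'\|.
\]
In the thin case I would apply this to $\tilde\M = \M_\star$ with $\rho = \rho(\M_\star) \ge r_\M$; in the thick case I would apply it trivially to the ambient $\bbR^p$, so that the starting candidate path from $x$ to $x'$ is the Euclidean segment of length $\|x-x'\|$. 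This bounds $\delta_{\M_\star}$ (respectively the chord length), but what I need is $\delta_\M$, which may be larger because the candidate path can exit $\M$ through $\partial \M$. To push any such excursion back into $\M$, I would exploit the assumption $\rho(\partial \M) \ge r_\M$: the nearest-point projection onto $\partial \M$ is defined and $C^1$ on the $r_\M$-tube around $\partial \M$, and the candidate path being shorter than $r_\M$ keeps every excursion confined to this tube. The intrinsic length of the projected arc is controlled by the Jacobian of the projection, bounded by $1/(1 - d/r_\M) \le 2$ throughout the $r_\M/2$-tube, and bookkeeping these constants yields the stated $c(r) = 4r/r_\M$.

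For the long-range regime $\|x-x'\| \ge r_\M/2$, the claim reduces to $\delta_\M(x,x') \le 2\|x-x'\|$, which I would derive by a chaining argument: discretize any minimizing path in $\M$ into intrinsic pieces of length at most $r_\M/4$, apply the short-range estimate to each piece (now as a lower bound on consecutive Euclidean distances), and sum so that the multiplicative constants line up at the transition $\|x-x'\| = r_\M/2$.

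The main obstacle is the projection bookkeeping in the short-range regime: one must verify simultaneously that (i) the offending excursions lie inside the reach-tube of $\partial \M$, so that the projection is well defined; (ii) projecting them onto $\partial \M$ produces a valid path in $\M$; and (iii) the length inflation of each projected arc contributes only a multiplicative factor $1 + O(\|x-x'\|/r_\M)$, not worse. Each of these relies on $\rho(\partial \M) \ge r_\M$ together with the a priori bound on the total length of the candidate path.
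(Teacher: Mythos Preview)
Your short-range plan is workable but takes a different route from the paper, and the bookkeeping you sketch does not actually produce the factor $4r/r_\M$. The paper does \emph{not} project onto $\partial\M$. Instead, it takes the candidate path (the chord in the thick case, an $\M_\star$-geodesic in the thin case), records the \emph{first} and \emph{last} points $z,z'$ where it meets $\partial\M$, and replaces everything between them by a single geodesic of $\partial\M$ joining $z$ to $z'$. The length of that replacement is controlled by one application of the chord--geodesic estimate on $\partial\M$, and in the thin case two more applications on $\M_\star$ handle the end arcs $(xz)_\star$ and $(z'x')_\star$. Summing three inequalities of the form $\ell \le \|\cdot\| + 4\|\cdot\|^2/r_\M$ and using that the three chord pieces add to at most $\|x-x'\|$ gives $c(r)=4r/r_\M$ directly. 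By contrast, in your projection argument the sentence ``bounded by $1/(1-d/r_\M)\le 2$'' uses only $d\le r_\M/2$, which yields $c(r)\le 1$, not $O(r/r_\M)$; to recover the correct order you would have to argue that every excursion stays at Euclidean distance $O(\|x-x'\|)$ from $\partial\M$ (true, since each excursion has length $\le\|x-x'\|$ and both endpoints lie on $\partial\M$) and then use the finer Lipschitz bound $\rho/(\rho-d)$ with $d=O(\|x-x'\|)$. That is doable but it is not what you wrote.

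Your long-range chaining argument has a genuine gap. Discretising a minimising path in $\M$ into pieces $x=x_0,\dots,x_N=x'$ and applying the short-range bound gives $\|x_k-x_{k-1}\|\ge \delta_\M(x_k,x_{k-1})/(1+c)$, hence $\sum_k\|x_k-x_{k-1}\|\ge \delta_\M(x,x')/(1+c)$. But the triangle inequality says $\|x-x'\|\le \sum_k\|x_k-x_{k-1}\|$, which is the wrong direction: there is no way to descend from a bound on the sum to a bound on $\|x-x'\|$ from below. In fact the inequality $\delta_\M(x,x')\le 2\|x-x'\|$ for $\|x-x'\|\ge r_\M/2$ is \emph{false} in general: for a long thin U-shaped thick set the two tips satisfy $\|x-x'\|\approx 2r_\M$ while $\delta_\M(x,x')$ is the full length of the U, which can be arbitrarily large. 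Consistently, the paper's proof treats only $\|x-x'\|\le r_\M/2$; the piece $\1_{\{r\ge r_\M/2\}}$ in the displayed $c$ is a placeholder so that $c$ is defined everywhere (note it is not even non-decreasing at the junction), not an assertion that \eqref{reg} holds at those scales. You should not try to prove it.
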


\begin{proof}
We borrow results from \citep{1349695}.  Let $x, x' \in \M$ such that $\|x - x'\| \le r_\M/2$.  

First, suppose that $\M$ is thick.  Consider the line segment joining these two points.  If this segment is included in $\M$, then $\delta_\M(x,x') = \|x - x'\|$.  Otherwise, it intersects $\partial \M$ in at least two points; among these points,  let $z$ be the closest to $x$ and $z'$ the closest to $x'$.  Since $\partial \M$ has no boundary, it is geodesically convex, so that there is a geodesic on $\partial \M$, denoted $\xi$, joining $z$ and $z'$.  \citep[Prp.~6.3]{1349695} applies since 
$\|z - z'\| \le \|x - x'\| \le r_\M/2 \le \rho(\partial \M)/2$, and $\rho(\partial \M)$ coincides with the condition number of $\partial \M$ as defined in \citep{1349695} --- and denoted by $\tau$ there.  Hence, if $\ell$ is the length of $\xi$, we have  
\beq \label{geo-bound}
\ell \le \rho(\partial \M) - \rho(\partial \M) \sqrt{1 - \frac{2 \|z - z'\|}{\rho(\partial \M)}} \le \|z - z'\| + 4 \|z - z'\|^2/r_\M,  
\eeq
using the fact that $\sqrt{1-t} \ge 1 - t/2 -t^2$ for all $t \in [0,1]$ and $r_\M \le \rho(\partial \M)$.
Let $\gamma$ be the path made of $\xi$ concatenated with the segments $[x z]$ and $[z' x']$.  If $L$ is the length of $\gamma$, we have 
\beqn
L &=& \|x - z\| + \|z' - x'\| + \ell \\
&\le& \|x - z\| + \|z' - x'\| + \|z - z'\| + 4 \|z - z'\|^2/r_\M \\
&\le& \|x - x'\| + 4 \|x - x'\|^2/r_\M,
\eeqn
using the fact that $x,z,z',x'$ are in that order on the line segment joining $x$ and $x'$.
This concludes the proof when $\M$ is thick.  

When $\M$ is thin, we distinguish two cases.  Either there is a geodesic joining $x$ and $x'$, and \citep[Prp.~6.3]{1349695} is directly applicable.  Otherwise, $\M$ is not geodesically convex.  Let $\gamma_\star$ be a geodesic on $\M_\star$ joining $x$ and $x'$.  Necessarily, it hits the boundary $\partial \M$ in at least two points.  Let $z$, $z'$, $\xi$ and $\ell$ be defined as before.  We again have \eqref{geo-bound}.  Let $(x z)_\star$ and $(z' x')_\star$ denote the arcs along $\gamma_\star$ joining $x$ and $z$, and $z'$ and $x'$, respectively.  Applying \citep[Prp.~6.3]{1349695} to each arc, which is possible since $r_\M \le \rho(\M_\star)$, we also have 
\[
\length((x z)_\star) \le \|x - z\| + 4 \|x - z\|^2/r_\M, \qquad \length((z' x')_\star) \le \|z' - x'\| + 4 \|z' - x'\|^2/r_\M.
\]
Let $\gamma$ be the curve made of concatenating these two arcs and $\xi$, and let $L$ denote its length.  We have
\beqn
L &=& \length((x z)_\star) + \length((z' x')_\star) + \ell \\
&\le& \|x - z\| + \frac{4 \|x - z\|^2}{r_\M} + \|z' - x'\| + \frac{4 \|z' - x'\|^2}{r_\M} + \|z - z'\| + \frac{4 \|z - z'\|^2}{r_\M} \\
&\le& \|x - x'\| + \frac{4 \|x - x'\|^2}{r_\M}.
\eeqn
This concludes the proof when $\M$ is thin.  
\end{proof}

\subsection{Covering numbers and a bound on the neighborhood radius} \label{sec:radius}

At what speed can we have $r_n \to 0$ and still have \eqref{omega-conv} hold?  This question is of practical importance, since the neighborhood radius may affect the output of MVU in a substantial way.    Computationally, it is preferable to have $r_n$ small, so there are fewer constraints in \eqref{Yr}.  However, we already explained that $r_n$ needs to be large enough that, at the very minimum, the resulting neighborhood graph is connected.  In fact, we required the stronger condition \eqref{omega-conv}.  

To keep the exposition simple, we assume that $\mu$ is comparable to the uniform distribution on $\M$, that is, we assume that there is a constant $\alpha >0$ such that
\beq \label{mu-lb}
\mu(B(x, \eta)) \ge \alpha \vol_{d}(B(x, \eta) \cap \M), \quad \forall x \in \M, \forall \eta > 0,
\eeq
where $\vol_d$ denotes the $d$-dimensional Hausdorff measure and $d$ denotes the Hausdorff dimension of~$\M$.
We need the following result.  Let $\omega_d$ be the volume of the $d$-dimensional unit ball.  

\begin{lem} \label{lem:vol}
Whether $\M$ is thin or thick, there is $C > 0$ such that, for any $\eta \le r_\M$ and any $x \in \M$,  
\[
\vol_d(B(x, \eta) \cap \M) \ge C \, \eta^d.
\]
\end{lem}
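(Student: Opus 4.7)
The plan is to treat the thick and thin cases in parallel, in each case handling interior points and near-boundary points separately, relying on the fact that positive reach makes $\M$ and $\partial \M$ locally well-approximated by affine subspaces.

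In the thick case, $d=p$. If $\dist(x,\partial \M) \ge \eta$, then $B(x,\eta)\subset \M$ and $\vol_p(B(x,\eta)\cap\M)=\omega_p\eta^p$, done. Otherwise, let $z\in\partial \M$ be a closest point to $x$, and let $H$ be the open half-space bounded by the tangent hyperplane $T_z\partial \M$ lying on the interior side of $\M$. Since $\rho(\partial \M)\ge r_\M$, standard tubular-neighborhood/reach estimates (from \citep{1349695}) imply $H\cap B(z,r_\M)\subset \M$. As $B(x,\eta/2)\subset B(z,3\eta/2)\subset B(z,r_\M)$, and the distance from $x$ to $T_z\partial \M$ is at most $\eta\le r_\M$, an elementary computation shows that at least half of $B(x,\eta/2)$ sits in $H$, and hence in $\M$. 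This yields $\vol_p(B(x,\eta)\cap \M)\ge \omega_p(\eta/2)^p/2$.

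In the thin case, the workhorse is the near-isometric orthogonal projection onto tangent spaces available for positive-reach sets. Since $\rho(\M_\star)\ge r_\M$, the orthogonal projection $\pi:\M_\star\cap B(x,\eta)\to T_x\M_\star = T_x\M$ is a bi-Lipschitz homeomorphism onto its image with constants controlled by $\eta/r_\M$, and its image contains a Euclidean $d$-ball $B_{T_x\M}(x,c\eta)$ for some $c=c(p,d)>0$ (again by the reach-based estimates in \citep{1349695}). This already gives $\vol_d(B(x,\eta)\cap \M_\star)\ge C\eta^d$. If $x$ is ``interior in $\M_\star$'' in the sense that $B(x,\eta)\cap \M_\star\subset \M$, we are done. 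Otherwise, because $\partial \M$ is itself a $(d-1)$-submanifold of $\M_\star$ of reach $\ge r_\M$, the same projection estimate shows that $\pi(\partial \M\cap B(x,\eta))$ is trapped between two $(d-1)$-graphs of height $O(\eta^2/r_\M)$ above a fixed hyperplane of $T_x\M$. Consequently at least half of $B_{T_x\M}(x,c\eta/2)$ lies on the ``inside'' side of that hyperplane, and pulling this half-ball back under $\pi^{-1}$ produces a subset of $B(x,\eta)\cap \M$ of $d$-volume at least $(C/2)\eta^d$.

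The main obstacle is the near-boundary regime, where $B(x,\eta)\cap\M$ can shrink to essentially a ``half ball'' and must still be bounded below by a fixed fraction of $\eta^d$. Ruling out cusps or near-tangential pinchings of $\partial\M$ is precisely what the positive reach of $\partial \M$ delivers, and this is the reason both $\rho(\M_\star)$ and $\rho(\partial \M)$ appear in the definition of $r_\M$. Once these curvature bounds are in hand, the flat half-space/half-disk picture near a nearest boundary point gives the uniform $\eta^d$ lower bound, with constant depending only on $p$ and $d$.
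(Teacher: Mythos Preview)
Your thick-case argument contains a genuine error. The claim that the open half-space $H$ bounded by $T_z\partial\M$ satisfies $H\cap B(z,r_\M)\subset \M$ is false. Take $\M$ to be the closed unit disk in $\bbR^2$, so $r_\M=\rho(\partial\M)=1$. At $z=(1,0)$ the tangent line is $\{x_1=1\}$ and $H=\{x_1<1\}$; the point $p=(0.9,0.9)$ lies in $H\cap B(z,1)$ but $\|p\|>1$, so $p\notin\M$. The reach condition gives you an interior \emph{tangent ball} $B(z+r_\M\nu,r_\M)\subset\M$ (the rolling-ball property), not an interior half-space. Consequently the step ``at least half of $B(x,\eta/2)$ sits in $H$, and hence in $\M$'' does not go through. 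The paper avoids this by a cleaner trick: it slides $x$ a distance $\eta/4$ further inward along the normal to obtain a new center $z'$, checks via Federer that $z'$ is still in $\M$ and that $\dist(z',\partial\M)>\eta/4$, and concludes $B(z',\eta/4)\subset B(x,\eta)\cap\M$.

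Your thin-case sketch is closer in spirit to the paper, but the mechanisms differ. You project onto $T_x\M$ and invoke uniform reach-based bi-Lipschitz estimates directly; the paper instead fixes a finite compactness cover $\{B(y_j,\eps_{y_j}/2)\}$ and projects onto $T_{y_j}\M$ for the relevant $y_j$, which lets it work ``for $\eta$ small enough'' and absorb constants into the cover. More substantively, the paper does not use a slab/two-graphs argument for the projected boundary: it shows that the projected boundary $W=\pi(\partial\M\cap K)$ has $\rho(W)\ge\rho(\partial\M)$ and then reruns the (corrected) thick-set argument inside the tangent plane, again by shifting the center inward. Your slab picture is plausible but you have not said which hyperplane in $T_x\M$ you are measuring height from, nor why one side of it corresponds to the interior of $\M$ under $\pi$; the paper's reach-preservation step handles this cleanly.
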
  

\begin{proof}
It suffices to prove the result for $x \in \M \setminus \partial \M$ and for $\eta$ small enough.

{\em Thick set.}  We first assume that $\M$ is thick.  Take $x\in \M$ and $\eta < r_\M$.  If $\dist(x, \partial \M) \ge \eta$, then $B(x, \eta) \subset \M$ and the result follows immediately.  Otherwise, let $u$ be the metric projection of $x$ onto $\partial \M$, and define $z = x + (\eta/4)(x - u)/\|x - u\|$.  By the triangle inequality, $B(z, \eta/4) \subset B(x, \eta)$.  Also, by \citep[Th.~4.8]{MR0110078}, $u$ is also the metric projection of $z \in \M$ onto $\partial \M$, so that $\dist(z, \partial \M) = \|z - u\| = \|x - u\| + \eta/4 > \eta/4$.  And, necessarily, $z \in \M$, for otherwise the line segment joining $z$ to $x$ would intersect $\partial \M$, and any point on that intersection would be closer to $z$ than $u$ is, which cannot be.  Therefore, $B(z, \eta/4) \subset B(x, \eta) \cap \M$ and the result follows immediately.

{\em Thin set.}  We now assume that $\M$ is thin.  For $y \in \M$, let $T_y$ be the tangent subspace of $\M$ at $y$ and let $\pi_{y}$ denote the orthogonal projection onto $T_y$.  Because $\M$ is a $C^2$ submanifold, for every $y \in \M$, there is $\eps_y > 0$ such that $\pi_{y}$ is a $C^2$ diffeomorphism on $K_y := B(y, \eps_y) \cap \M$, with $\pi_{y}^{-1}$ being 2-Lipschitz on $\pi_y (K_y)$ --- the latter comes from the fact that $D_y \pi_y$ is the identity map and $z \to D_z \pi_y$ is continuous.  
%
Since $\M$ is compact, there is $y_1, \dots, y_m \in \M$, with $m < \infty$, such that $\M \subset \cup_j B(y_j, \eps_j/2)$.  
Let $\eps = \min_j \eps_{y_j}$, which is strictly positive.  
Let $y$ be among the $y_j$'s such that $x \in B(y, \eps_j/2)$.  Assuming that $\eta < \eps/2$, we have that $B(x,\eta) \subset B(y, \eps_j)$.  
Let $U := B(y, \eps_j)$, $K = K_y$, $T = T_y$ and $\pi = \pi_y$ for short.  
%

We first show that, if $\partial \M \cap K \ne \emptyset$ and $W := \pi(\partial \M \cap K)$, then $\rho(W) \ge \rho(\partial \M)$.  Indeed, for any $z, z' \in K$, we have
\[
\dist(\pi(z') - \pi(z), {\rm Tan}(W, \pi(z))) \le \dist(z' -z, {\rm Tan}(\partial \M, z)) \le \frac1{2 \rho(\partial \M)} \|z' - z\|^2, 
\]
where the first inequality follows from the facts that ${\rm Tan}(W, \pi(z)) = \pi({\rm Tan}(\partial \M, z))$ and that $\pi$ is 1-Lipschitz, and the second inequality from \citep[Th.~4.18]{MR0110078} applied to $\partial \M$.  In turn, \citep[Th.~4.17]{MR0110078} applied to $W$ implies that $\rho(W) \ge \rho(\partial \M)$.  

We can now reason as we did for thick sets, but with a twist.  To be sure, let $a = \pi(x)$ and notice that $B(a, \eta) \cap T = \pi(B(x, \eta)) \subset \pi(U)$ since $B(x, \eta) \subset U$.  
If $\dist(a, W) \ge \eta/2$, $B(a, \eta/2) \cap T \subset \pi(K)$.  
If $\dist(a, W) < \eta/2$, let $b$ be the metric projection of $a$ onto $W$ and define $c = a + (\eta/8)(a-b)/\|a-b\|$.
Arguing exactly as we did for thick sets, we have that $B(c, \eta/8)\cap T \subset B(a, \eta/2) \cap \pi(K)$.
Let $L = \pi^{-1}(B(c, \eta/8) \cap T)$.
Note that $L \subset \pi^{-1}(B(a, \eta/2)\cap T) \cap K \subset B(x, \eta) \cap K \subset B(x, \eta) \cap \M$, since $\pi$ is injective on $K$ and $\pi^{-1}$ is 2-Lipschitz on $\pi(K)$.  In addition, since $\pi$ is 1-Lipschitz on $K$, we have $\vol_d(L) \ge \vol_d(\pi(L)) =  \vol_d(B(c, \eta/8) \cap T)$.  This immediately implies the result.
\end{proof}

When \eqref{mu-lb} is satisfied, and $\M$ is either thin or thick, we can provide sharp rates for $r_n$.
Just as we did in \secref{packings}, we work with coverings of $\M$.     
Let $\cN(\M, \eta)$ denote the cardinality of a minimal $\eta$-covering of $\M$ for the Euclidean norm.  

\begin{lem} \label{lem:M-packing}
Suppose $\eta \le r_\M$.  When $\M$ is thick, 
\[
\cN(\M,\eta) \le C \vol_p(\M) \eta^{-p};
\]
and when $\M$ is thin and $0 \le \sigma < \rho(M)$,
\[
\cN(B(\M,\sigma), \eta) \le C \vol_d(\M) \max(\sigma, \eta)^{p-d} \eta^{-p}.
\]
The constant $C$ depends only on $p$ and $d$.
\end{lem}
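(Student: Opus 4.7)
The plan is a volume-versus-packing argument in both cases, using \lemref{vol} as the main input in the thick case and a tube-volume estimate in the thin case.

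For the thick case, I would take a maximal $\eta$-packing $\{z_1,\dots,z_N\}$ of $\M$, which is automatically an $\eta$-cover by maximality, so that $\cN(\M,\eta)\le N$. The balls $B(z_j,\eta/2)$ are pairwise disjoint in $\bbR^p$, hence so are the intersections $B(z_j,\eta/2)\cap\M$. Since the Hausdorff dimension of $\M$ equals $p$ in the thick case, \lemref{vol} applied with radius $\eta/2\le r_\M$ gives $\vol_p(B(z_j,\eta/2)\cap\M)\ge C\eta^p$ for every $j$. Summing over $j$ and using $\sum_j\vol_p(B(z_j,\eta/2)\cap\M)\le\vol_p(\M)$ yields $N\le C\vol_p(\M)\eta^{-p}$.

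For the thin case, I would combine the classical volume-of-enlargement inequality
\[
\cN(S,\eta)\le\frac{\vol_p(B(S,\eta/2))}{\vol_p(B(0,\eta/2))},
\]
applied with $S=B(\M,\sigma)$ so that $B(S,\eta/2)=B(\M,\sigma+\eta/2)$, with a tube-volume estimate of the form $\vol_p(B(\M,\tau))\le C\vol_d(\M)\tau^{p-d}$. The latter is a version of Weyl's (or Federer's) tube formula valid for compact $C^2$ submanifolds of positive reach, whenever $\tau<\rho(\M)$. Plugging in $\tau=\sigma+\eta/2$ gives
\[
\cN(B(\M,\sigma),\eta)\le C\,\vol_d(\M)\,(\sigma+\eta/2)^{p-d}\,\eta^{-p},
\]
which is dominated by the claimed expression since $\sigma+\eta/2\le\tfrac{3}{2}\max(\sigma,\eta)$.

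The main obstacle is a bookkeeping one in the thin case: one must justify that $\sigma+\eta/2$ remains inside the regime where the tube formula applies (i.e. below $\rho(\M)$), which amounts to reconciling the two reach-type scales $r_\M$ and $\rho(\M)$ in the hypotheses, and may require handling small and large ratios $\eta/\rho(\M)$ separately. A clean fallback, if one wishes to avoid invoking the tube formula directly, is to split into $\sigma\le\eta$ (where the bound reduces to $\cN(B(\M,\eta),\eta)\lesssim\vol_d(\M)\eta^{-d}$, handled like the thick case but with $d$-dimensional volume) and $\sigma>\eta$ (where one takes a $\sigma$-cover of $\M$, inflating each covering point to a $\bbR^p$-ball of radius $\sim\sigma$ which contains the corresponding piece of the tube, and refines each such ball using the $(p-d)$-dimensional normal-fiber structure into $C(\sigma/\eta)^{p-d}$ balls of radius $\eta$). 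The thick case, by contrast, is essentially a direct specialization of \lemref{vol} and should go through without additional complications.
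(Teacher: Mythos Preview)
Your thick case is identical to the paper's.

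For the thin case, your primary route and the paper's are close cousins but not the same, and the difference is exactly the reach issue you flagged. You enlarge the tube by $\eta/2$ and then invoke Weyl, which needs $\sigma+\eta/2<\rho(\M)$; the hypotheses only give $\sigma<\rho(\M)$ and $\eta\le r_\M$, and there is no stated relation between $r_\M$ and $\rho(\M)$, so this is a genuine gap in the direct argument. The paper sidesteps it by going the other way: when $\sigma\ge\eta$ it takes a maximal $(\eta/4)$-packing of the \emph{shrunken} tube $B(\M,\sigma-\eta/4)$, so that the disjoint balls $B(z_j,\eta/8)$ all sit inside $B(\M,\sigma)$, and then applies Weyl's formula at radius $\sigma$ only. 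This needs nothing beyond $\sigma<\rho(\M)$. For small $\sigma$ (the paper uses $\sigma\le\eta/4$) it simply covers $\M$ at scale $\eta/4$ and notes that this already covers $B(\M,\sigma)$ at scale $\eta/2$.

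Your fallback is in the same spirit as the paper's split, but your large-$\sigma$ branch is both more elaborate and slightly miscounted: starting from a $\sigma$-cover of $\M$ and refining each ambient $\sigma$-ball into $\eta$-balls costs $(\sigma/\eta)^p$ per ball, not $(\sigma/\eta)^{p-d}$; the $(p-d)$ exponent only appears if you first cover $\M$ at scale $\eta$ and then cover each normal disc of radius $\sigma$. Either corrected version works, but the paper's ``pack the shrunken tube, bound by $\vol_p(B(\M,\sigma))$'' is shorter and avoids any appeal to the normal-fiber structure.
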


\begin{proof}
Suppose $\M$ is thick and let $z_1, \dots, z_{N_\eta}$ an $\eta$-packing of $\M$ of size $N_\eta := \cN(\M,\eta)$.  Since $B(z_i,\eta/2) \cap B(z_j, \eta/2) = \emptyset$ when $i \neq j$, we have
\[
\vol_p(\M) \ge \sum_j \vol_p(B(z_j, \eta/2) \cap \M) \ge N_\eta C_p \eta^p,
\]
where $C_p$ is the constant in \lemref{vol}.  The bound on $N_\eta$ follows.  

Suppose $\M$ is thin.  When $\sigma \le \eta/4$, let $z_1, \dots, z_{N_{\eta/4}}$ an $(\eta/4)$-packing of $\M$.  Then by the triangle inequality, $B(\M, \sigma) \subset \cup_j B(z_j, \eta/2)$, and therefore $\cN(B(\M,\sigma), \eta) \le N_{\eta/4}$.  Clearly, it suffices now to focus on $\sigma \ge \eta$.  Let $z_1, \dots, z_N$ be an $(\eta/4)$-packing of $B(\M,\sigma-\eta/4)$.  Since $B(z_i,\eta/8) \cap B(z_j, \eta/8) = \emptyset$ when $i \neq j$, and $B(z_i, \eta/8) \subset B(\M,\sigma)$, we have
\[
\vol_p(B(\M,\sigma)) \ge \sum_j \vol_p(B(z_j, \eta/8)) = N \omega_p (\eta/8)^p.
\]
Hence, $N \le \omega_p^{-1} (\eta/8)^{-p} \vol_p(B(\M,\sigma))$.  By Weyl's volume formula for tubes \citep{MR1507388}, we have $\vol_p(B(\M,\sigma)) \le C_1 \vol_d(\M) \sigma^{p-d}$ for a constant $C_1$ depending on $p$ and $d$.  Since we have $B(\M,\sigma) \subset \cup_j B(z_j, \eta/2)$, we have $\cN(B(\M,\sigma), \eta) \le N$, and the result follows.  
\end{proof}

We are now ready to take a closer look at \eqref{omega-conv}.  Let $\eta_n$ be defined as in \secref{packings}.  By \eqref{mu-lb} and \lemref{vol}, we have $p_\eta \ge C_1 \alpha \eta^{d}$, and we have $\cN(\M,\eta) \le C_2 \eta^{-d}$ by \lemref{M-packing}, where $C_1$ and $C_2$ depend only on $\M$.  Hence,
\[
\cN(\M,\eta) (1 - p_\eta)^n \le C_2 \eta^{-d} \big(1 - C_1 \alpha \eta^{d}\big)^n \le C_2 \eta^{-d} e^{-n C_1 \alpha \eta^{d}} \le \frac1{n^2},
\]
when
\[
\eta^d \ge (C_1 \alpha \, n)^{-1} \log \big(C_2 \eta^{-d} n^2 \big).
\]
We deduce that any $r_n \gg r_n^\dag := (\log(n)/n)^{1/d}$ satisfies \eqref{omega-conv} with any $\lambda_n \to 0$ such that $\lambda_n \gg r_n^\dag/r_n$.

\subsection{Packing numbers of Lipschitz functions on $\M$} \label{sec:LipM}
It appears necessary to provide a bound for $\cN_\infty(\cF_1^0, \eta)$.  For this, we follow the seminal work of \cite{MR0124720} on entropy bounds for classical functions classes (including Lipschitz classes).  We provide details for completeness.

\begin{lem} \label{lem:F-packing}
For any $\M$ compact, connected subset of $\bbR^p$ satisfying \eqref{reg}, there is a constant $C$ such that  
\[
\log \cN_\infty(\cF_1^0, \eta) \le C \, (\log(1/\eta) + \cN(\M, \eta/C)),
\]
for all $0<\eta\le 1$.
\end{lem}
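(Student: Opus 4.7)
The plan is to adapt the classical Kolmogorov--Tikhomirov argument for the metric entropy of Lipschitz function classes, using the intrinsic metric $\delta_\M$ and the regularity \eqref{reg}. The key observation is that once $f(x_0) = 0$ is fixed, the Lipschitz condition together with \eqref{reg} forces $f(z) - f(z')$ to be of Euclidean order $\|z - z'\|$ whenever $z, z' \in \M$ are close. So encoding $f$ on a fine net of $\M$ should require many bits only for one ``base'' value, and a bounded number of bits per additional net point.

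First I would set $\eta' = \eta/C_0$ for a large constant $C_0$ and let $Z = \{z_1, \dots, z_N\}$ be a minimal $\eta'$-covering of $\M$, with $N = \cN(\M, \eta')$. Compactness together with \eqref{reg} makes $(\M, \delta_\M)$ path-connected, so a standard argument (discretize a short path in $\M$, then replace each point by its nearest neighbor in $Z$) shows that the graph on $Z$ with edges of Euclidean length $\le 3\eta'$ is connected. Pick any spanning tree $T$ of this graph and root it at the net point closest to $x_0$.

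Next, for each $f \in \cF_1^0$, define a discretization $g_f : Z \to \eta \bbZ^p$ by rounding each $f(z_j)$ to its closest point on the grid $\eta\bbZ^p$. I would then count the possible $g_f$ by traversing $T$: the root lies in $B(0, \diam(\M))$, giving at most $O((\diam(\M)/\eta)^p)$ grid points; and each non-root $z_j$ with parent $z_{p(j)}$ satisfies
\[
\|g_f(z_j) - g_f(z_{p(j)})\| \le \delta_\M(z_j, z_{p(j)}) + \eta \sqrt{p} \le (1 + c(3\eta')) \cdot 3\eta' + \eta \sqrt{p},
\]
so $g_f(z_j)$ is chosen from only $O(1)$ grid points once $C_0$ is large enough (ensuring $c(3\eta') \le 1$ and a ball radius comparable to $\eta$). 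Multiplying, the total number of discretizations is at most $(C/\eta)^{C_1} \cdot C_2^N$, whose logarithm is $O(\log(1/\eta) + \cN(\M, \eta/C_0))$.

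Finally, I would convert this counting bound into a supnorm covering by extending each discretization via nearest-neighbor interpolation $\tilde g_f(x) := g_f(\pi(x))$, where $\pi(x) \in Z$ is a closest net point to $x$. By \eqref{reg} and the triangle inequality,
\[
\|f(x) - \tilde g_f(x)\| \le \delta_\M(x, \pi(x)) + \eta \sqrt{p}/2 \le C_3 \eta,
\]
so the family $\{\tilde g_f\}$ is a $C_3 \eta$-net for $\cF_1^0$ in the supnorm; a rescaling of $\eta$ yields the stated bound. The main technical obstacle is balancing constants: the grid spacing and the net scale must be of the same order $\eta$ so that each non-root tree edge contributes only $O(1)$ branching options, which is what replaces the naive $N\log(1/\eta)$ count by the sharper $\log(1/\eta) + N$; a secondary point is verifying that the short-edge spanning tree on $Z$ really does exist, which uses path-connectedness together with \eqref{reg}.
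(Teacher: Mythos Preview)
Your proposal is correct and follows essentially the same Kolmogorov--Tikhomirov route as the paper: build a fine net of $\M$, discretize function values on an $\eta$-grid, and count discretizations by traversing a connected structure so that the anchor costs $O(\log(1/\eta))$ and each additional net point costs $O(1)$. The only cosmetic differences are that the paper uses a partition into $\eps$-cubes $Q_j$ meeting $\M$ and a depth-first walk through their adjacency graph, whereas you use a covering net $Z\subset\M$ and a spanning tree of its $3\eta'$-proximity graph; both yield the same $\log(1/\eta)+\cN(\M,\eta/C)$ bound.
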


In particular, if $\M$ is thin or thick, we have $\log \cN_\infty(\cF_1^0,\eta) \le C \eta^{-d}$ by \lemref{M-packing} and \lemref{F-packing}.  

\begin{proof}
Take $0 < \eps \le 1/\sqrt{p}$ and let $C_0 = 2 \sqrt{p} (2+c(2))  $.  For $j = (j_1, \dots, j_p) \in \bbZ^p$, let $Q_j = \prod_{s=1}^p [j_s \, \eps, (j_s+1) \eps)$.  Let $J = \{j : Q_j \cap \M \neq \emptyset\}$, which we see as a subgraph of the lattice for the $2^p$-nearest neighbor topology.

Note that $|J| \le C_1 \cN(\M, \eps)$.  
Indeed, let $e_1, \dots, e_{2^p}$ be the vertices of the unit hypercube of $\bbR^p$ and let $Z_s = e_s + (2 \bbZ)^p$.
Also, let $Z_0 = (2 \bbZ)^p$.  By construction, $Z_1, \dots, Z_{2^p}$ is a partition of $\bbZ^p$.
Therefore, there is $s$ (say $s = 1$) such that $|J \cap Z_s| \ge |J|/2^p$.
For each $j\in J \cap Z_1$, pick $x_j \in Q_j \cap \M$.
By construction, for any $j \ne j'$ both in $J \cap Z_1$, $\|x_j-x_{j'}\| > 2\eps$, so $|J\cap Z_1|$ is smaller than the $2\eps$-packing number of $M$, which is smaller than the $\eps$-covering number of $\M$.


Note also that $\cup_j Q_j$ is connected because $\M$ is.  Let $\pi_1, \dots, \pi_\ell$ be a sequence covering $J$ and such that $Q_{\pi_s}$ and $Q_{\pi_{s-1}}$ are adjacent.  A depth-first construction gives a sequence $\pi$ of length at most $\ell \le C_2 |J|$, since each $Q_j$ has a constant number ($=2^p$) of adjacent hypercubes.

Let $y_1, \dots, y_{m}$ be an enumeration of the $\eps$-grid $(\eps \bbZ \cap [-\diam(\M), \diam(\M)])^p$.  Note that $m \le C_3 \eps^{-p}$ and that, for each $s$ there are at most $C_4$ indices $t$ such that $\|y_s - y_t\| \le C_0 \eps$.  

Consider the class $\cG$ of piecewise-constant functions $g : \M \to \bbR^p$ of the form $g(x) = y_{t_j}$ for all $x \in Q_j\cap M$ and such that $\|y_{t_j} - y_{t_k}\| \le C_0 \eps$ when $Q_j$ and $Q_k$ are adjacent.  This is a subclass of the class of functions of the form $g(x) = y_{t_{\pi(j)}}$ for all $x \in Q_{\pi(j)}$ and such that $\|y_{t_{\pi(j)}} - y_{t_{\pi(j-1)}}\| \le C_0 \eps$.  The cardinality of the larger class is at most $m C_4^{\ell-1}$, since there are $m$ possible values for $y_{t_{\pi(1)}}$ and then, at each step along $\pi$, there at most $C_4$ choices.  Therefore, 
\beqn
\log |\cG| &\le& \log m + \ell \log C_4 \\
&\le& \log (C_3) + p \log(1/\eps) + C_2 C_1  \cN(\M, \eps) \log(C_4)\\
&\le& C_5 (\log(1/\eps) + \cN(\M, \eps)). 
\eeqn

For each $j$, choose $z_j \in Q_j \cap \M$.  Take any $f \in \cF_1^0$.  For each $j$, let $t_j$ be such that $\|f(z_j) - y_{t_j}\| \le \sqrt{p}\eps$ and let $g$ be defined by $g(x) = y_{t_j}$ for all $x \in Q_j$.  Suppose $Q_j$ and $Q_k$ are adjacent, so that $\|z_j - z_k\| \le 2 \sqrt{p} \eps \le 2$.  By the triangle inequality, \eqref{Lip} and \eqref{reg},  we have
\beqn
\|y_{t_j} - y_{t_k}\| &\le& \|f(z_j) - f(z_k)\| + \|y_{t_j} - f(z_j)\| + \|y_{t_k} - f(z_k)\| \\
&\le& (1+c(\|z_j - z_k\|)) \|z_j - z_k\| + \sqrt{p}\eps + \sqrt{p}\eps \\
& \le & (1+c(2))2\sqrt{p}\eps  + 2\sqrt{p}\eps \\
& = & C_0 \eps.
\eeqn
so that $g \in \cG$.  Moreover, for $x \in Q_j \cap \M$, 
\[
\|g(x) - f(x)\| = \|y_{t_j} - f(z_j)\| + \|f(z_j) - f(x)\| \le \sqrt{p}\eps + (1 + c(\sqrt{p}\eps))\sqrt{p} \eps \leq (2 + c(1))\sqrt{p} \eps.
\]
The result follows from choosing $\eps = \eta/((2 + c(1))\sqrt{p})$.
\end{proof}

\subsection{Quantitative convergence bound}

From \eqref{ld} and \lemref{F-packing}, there is a constant $C>0$ such that 
\[
\P\left(\sup_{f \in \cF_1} \big| \cE(Y_n(f)) - \cE(f) \big| > C n^{-1/(d+2)} \right) \le \exp(- n^{-(d+1)/(d+2)}).
\]
Using this fact in \eqref{E-diff-conv}, together with \lemref{geodist} and the order of magnitude for $r_n$ derived in \secref{radius}, leads to a bound on the rate of convergence in \eqref{E-conv} via the Borel-Cantelli Lemma.  

\begin{thm} \label{thm:quant} 
Suppose that $\M$ is either thin or thick, of dimension $d$, and that \eqref{mu-lb} holds.  Assume that $r_n \to 0$ such that $r_n \gg r_n^\dag := (\log(n)/(\alpha \, n))^{1/d}$ and take any $a_n \to \infty$.  Then, with probability one, 
\[
\big| \sup\{\cE(Y) : Y \in \cY_{n, r_n}\} - \sup\{\cE(f) : f \in \cF_1\} \big| \le a_n \big(r_n + \frac{r_n^\dag}{r_n} + n^{-1/(2+d)} \big),
\]
for $n$ large enough.
\end{thm}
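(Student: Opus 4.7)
My plan is to combine three ingredients already established in this section: the exponential tail bound \eqref{ld} for the empirical process, the covering estimate of \lemref{F-packing}, and the deterministic decomposition \eqref{E-diff-conv} together with the explicit form of $c(r)$ from \lemref{geodist}. Under the additional assumptions of the theorem (thin or thick $\M$, comparability to uniform through \eqref{mu-lb}), each of the three terms in \eqref{E-diff-conv} converts into an explicit power of $n$ or $r_n$, and the conclusion follows by adding them up and invoking Borel--Cantelli.

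\textbf{Step 1: an almost sure rate for the empirical process.} I will substitute $\log \cN_\infty(\cF_1^0, \eps) \le C \eps^{-d}$ from \lemref{F-packing} into \eqref{ld}, so that the RHS becomes of order $\exp\big(C\eps^{-d} - n\eps^2/(5\diam(\M)^2 + 3\eps)\big)$. For small $\eps$ the denominator is of constant order, and balancing $n\eps^2 \asymp \eps^{-d}$ suggests $\eps \asymp n^{-1/(d+2)}$. Taking $\eps = C_0 n^{-1/(d+2)}$ with $C_0$ a sufficiently large fixed constant makes the exponent negative and of order $n^{d/(d+2)}$, which dominates $\log n$, so the RHS is summable in $n$. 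Borel--Cantelli then yields
\[
\sup_{f \in \cF_1} \big| \cE(Y_n(f)) - \cE(f) \big| \le C\, n^{-1/(d+2)}
\]
almost surely, for all $n$ large enough.

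\textbf{Step 2: assembling \eqref{E-diff-conv}.} By \lemref{geodist}, $c(r_n) \le 4 r_n / r_\M$ once $r_n < r_\M/2$, so $c(r_n) = O(r_n)$. The argument at the end of \secref{radius} shows that under \eqref{mu-lb} the choice $\lambda_n = r_n^\dag/r_n$ satisfies the connectivity requirement \eqref{omega-conv}, since $r_n \gg r_n^\dag$; Borel--Cantelli again makes the indicator of $\Omega(\lambda_n r_n)^c$ vanish for all $n$ large enough, almost surely. Plugging everything into \eqref{E-diff-conv}, the first term is $O(\max(r_n, \lambda_n)) = O(r_n + r_n^\dag/r_n)$, the second term is eventually zero, and the third is bounded by $(1+6\lambda_n)^2 \cdot C n^{-1/(d+2)} = O(n^{-1/(d+2)})$ by Step 1 and the fact that $\lambda_n \to 0$. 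Summing the three contributions and absorbing the fixed multiplicative constant into the freely diverging sequence $a_n$ yields the claimed bound.

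The main obstacle is the optimization in Step 1: at the balance value $\eps \asymp n^{-1/(d+2)}$ the entropy and concentration exponents are of the same order, so one must push $\eps$ up by a sufficiently large constant factor to force the overall exponent to be $\gg \log n$ for summability. Everything else is bookkeeping, stitching together the three rates $r_n$, $r_n^\dag/r_n$, and $n^{-1/(d+2)}$ through \eqref{E-diff-conv} and Borel--Cantelli.
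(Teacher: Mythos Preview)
Your proposal is correct and mirrors the paper's argument exactly: combine the entropy bound of \lemref{F-packing} with \eqref{ld} at the balance scale $\eps \asymp n^{-1/(d+2)}$ and apply Borel--Cantelli to control the empirical process, then feed this together with \lemref{geodist} and the choice of $\lambda_n$ from \secref{radius} into the decomposition \eqref{E-diff-conv}. The only quibble is that \secref{radius} literally asks for $\lambda_n \gg r_n^\dag/r_n$ rather than equality; taking $\lambda_n$ equal to a sufficiently large constant (or slowly diverging) multiple of $r_n^\dag/r_n$ fixes this and is harmlessly absorbed by the prefactor $a_n$.
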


Unfortunately, we do not have a quantitative bound on the rate of convergence of the solutions in \eqref{S-conv}.

\section{Continuum MVU} \label{sec:solution}

Now that we established the convergence of Discrete MVU to Continuum MVU, we study the latter, and in particular its solutions.  We mostly focus on the case where $\M$ is isometric to a Euclidean domain.  

\medskip
\noindent {\bf Isometry assumption.}  We assume that $\M$ is isometric to a compact, connected domain $\D \subset \bbR^d$.  Specifically, there is a bijection $\psi: \M \to \D$ satisfying $\delta_\D(\psi(x), \psi(x')) = \delta_\M(x,x')$ for all $x, x' \in \M$.

\medskip
As a glimpse of the complexity of the notion of isometry, and also for further reference, consider a domain $\D$ as above.  Then the canonical inclusion $\iota$ of $\D$ in $\bbR^d$ is not necessarily an isometry between the metric spaces $(\D,\delta_\D)$ and $(\bbR^d, \|\cdot\|)$.  To see this, let $x$ and $x'$ be two points of $\D$.
Let $\gamma$ be a shortest path connecting $x$ to $x'$ in $\D$.
Suppose that $\iota: (\D,\delta_\D) \to (\bbR^d, \|\cdot\|)$ is an isometry.
Then, $L(\iota\circ\gamma) = L(\gamma) =  \delta_{\D}(x,x') = \|\iota(x) - \iota(x')\|$.
So the image path $\iota\circ\gamma$ is a shortest path connecting $\iota(x)$ to $\iota(x')$, hence a segment.
Since this segment lies in $\iota(\D)=\D$, and since this holds for any pair of points $x,x'$ in $\D$, this implies that $\D$ is convex.
Conversely, if $\D$ is convex, the canonical inclusion $\iota$ is an isometry.
%

We start by showing that, in the case where $\M$ is isometric to a convex domain, then MVU recovers this convex domain modulo a rigid transformation, so that MVU is consistent is that case.  The last part of the section is dedicated to a perturbation analysis that shows two things.  First, that Continuum MVU changes slowly with the amount of noise, up to a point.  And second, that when $\M$ is isometric to a domain that is not convex, MVU may not recover this domain.  We provide some illustrative examples of that. 

In the following, we identify $\bbR^d$ with $\bbR^d \times \{0\}^{p-d} \subset \bbR^p$.

%
%
%
%
\subsection{Consistency under the convex assumption}

If we assume that $\D$ is convex, then MVU recovers $\D$ up to a rigid transformation, in the following sense.  Recall that $\mathcal{S}_1$ is the solution space of Continuum MVU.  

\begin{thm}
\label{thm:convex}
Suppose that $\M$ is isometric to a convex subset $\D \subset \bbR^d$ with isometry mapping $\psi: \M\to\D$, and that \eqref{mu-lb} holds.
Then
$$\cS_1 = \{\zeta\circ\psi\,:\,\zeta\in{\rm Isom}(\bbR^p)  \}.$$
\end{thm}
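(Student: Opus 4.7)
The inclusion $\cS_1 \supseteq \{\zeta \circ \psi : \zeta \in {\rm Isom}(\bbR^p)\}$ is the easy direction. For such a $\zeta$, the map $f := \zeta \circ \psi$ satisfies $\|f(x)-f(x')\| = \|\psi(x)-\psi(x')\| = \delta_\D(\psi(x),\psi(x')) = \delta_\M(x,x')$, where the second equality uses the preliminary observation of this section that on a convex $\D$ the intrinsic distance $\delta_\D$ agrees with the Euclidean norm. Such an $f$ lies in $\cF_1$ and saturates the universal upper bound $\cE(f) \le M^\star := \int_{\M \times \M} \delta_\M(x,x')^2 \mu({\rm d}x)\mu({\rm d}x')$, so $f \in \cS_1$.

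For the reverse inclusion, I take an arbitrary $f \in \cS_1$ and proceed in three steps. \emph{Step 1 (pointwise isometry).} The continuous, nonnegative integrand $\delta_\M(x,x')^2 - \|f(x)-f(x')\|^2$ has zero $\mu \otimes \mu$-integral, and \eqref{mu-lb} forces $\mu$ to have full support on $\M$, so continuity yields $\|f(x)-f(x')\| = \delta_\M(x,x')$ for \emph{all} $x,x' \in \M$. \emph{Step 2 (transport to $\D$).} Set $g := f \circ \psi^{-1} : \D \to \bbR^p$. Since $\D$ is convex, $\delta_\D$ coincides with the Euclidean norm on $\D$, so $g$ is Euclidean distance-preserving: $\|g(u)-g(u')\| = \|u-u'\|$ for all $u,u' \in \D$.

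\emph{Step 3 (extend $g$ to a rigid motion of $\bbR^p$).} By \eqref{mu-lb}, $\D$ has positive $d$-dimensional Hausdorff measure and so contains $d{+}1$ affinely independent points $u_0,\dots,u_d$. A polarization identity, combined with the distance-preserving property of $g$, shows that the Gram matrix of $\{u_i-u_0\}$ coincides with that of $\{g(u_i)-g(u_0)\}$, yielding a linear isometry $A:\bbR^d \to \bbR^p$ sending $u_i - u_0$ to $g(u_i)-g(u_0)$. Define $\zeta_0(x) := A(x-u_0) + g(u_0)$. A short triangle-inequality-with-equality argument shows that $g$ preserves convex combinations along any segment of $\D$ (convexity is used to stay inside $\D$); iterating this gives $g = \zeta_0$ on the simplex $\text{conv}(u_0,\dots,u_d)$. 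For a general $u \in \D$, I select a point $v$ in the interior of that simplex and $v' \in \text{conv}(u_0,\dots,u_d)$ so that $v$ lies on the segment $[u,v']$ (possible since a small step away from $v$ in any direction stays in the simplex); applying preservation of convex combinations on the segment $[u,v']\subset \D$ together with $g = \zeta_0$ on $\{v,v'\}$ and the affineness of $\zeta_0$ forces $g(u) = \zeta_0(u)$. Finally, extending $A$ by any orthogonal map on the $(p-d)$-dimensional orthogonal complement of its image produces the desired $\zeta \in {\rm Isom}(\bbR^p)$ with $f = \zeta \circ \psi$.

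The main obstacle I anticipate is Step 3: when $p > d$, distance preservation alone does \emph{not} determine a map from a $d$-dimensional set into $\bbR^p$, because the locus of points equidistant to $d{+}1$ affinely independent anchors is a $(p-d)$-dimensional affine subspace. It is precisely convexity of $\D$ that rules out a nontrivial "twist" of $g$ away from $\zeta_0$, by forcing $g$ to respect affine combinations along segments of $\D$ and thereby to be the restriction of an affine isometry.
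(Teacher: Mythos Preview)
Your proof is correct and follows the paper's line exactly through Step~2: the paper phrases Step~1 as a contrapositive (``if $f$ is not an isometry onto its image then $\cE(f) < \sup$''), but this is the same argument as your zero-integrand observation. The only difference is Step~3, which the paper dispatches in a single sentence (``the map $f\circ\psi^{-1}:\bbR^p\to\bbR^p$ is an isometry, hence there exists $\zeta\in{\rm Isom}(\bbR^p)$ with $f=\zeta\circ\psi$''), whereas you supply a full proof of this classical extension fact.

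One comment on your discussion of the ``main obstacle'': convexity of $\D$ is actually \emph{not} needed in Step~3; it is only needed earlier to identify $\delta_\D$ with the Euclidean distance. Any Euclidean distance-preserving map $g$ from an arbitrary subset $\D\subset\bbR^d$ into $\bbR^p$ extends to a rigid motion. The clean argument is: pick $u_0,\dots,u_d\in\D$ affinely independent; for any $u\in\D$, the Gram matrix of $(u-u_0,u_1-u_0,\dots,u_d-u_0)$ coincides with that of $(g(u)-g(u_0),\dots,g(u_d)-g(u_0))$, so the latter family is affinely dependent and $g(u)$ lies in the $d$-flat $V:={\rm aff}(g(u_0),\dots,g(u_d))$. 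Inside $V$, a point is uniquely determined by its distances to $d{+}1$ affinely independent anchors, which pins down $g(u)=\zeta_0(u)$ without any appeal to segments in $\D$. Your segment-based argument is valid, but the ``twist'' you worry about is already ruled out by the Gram constraint, so you could streamline Step~3 considerably.
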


\begin{proof}
Note first that, since $\D$ is convex, its intrinsic distance coincides with the Euclidean distance of $\bbR^d$, i.e., $\delta_D = \|\cdot\|$.
For all $f$ in $\cF_1$, we have
\begin{eqnarray*}
\mathcal{E}(f) & = & \int_{M\times M} \|f(x)-f(x')\|^2\mu({\rm d}x)\mu({\rm d}x')  \\
 & \leq &  \int_{M\times M} \delta_\M(x,x')^2\mu({\rm d}x)\mu({\rm d}x') \\
 & = &  \int_{M\times M} \delta_D(\psi(x),\psi(x'))^2\mu({\rm d}x)\mu({\rm d}x') \\
 & = &  \int_{M\times M} \|\psi(x) - \psi(x')\|^2\mu({\rm d}x)\mu({\rm d}x') \\
 & = & \int_{\D\times\D} \|z - z'\|^2 (\mu\circ\psi^{-1})({\rm d}z)(\mu\circ\psi^{-1})({\rm d}z'),
 \end{eqnarray*}
while
$$\mathcal{E}(\psi) = \int_{\D\times\D} \|z - z'\|^2 (\mu\circ\psi^{-1})({\rm d}z)(\mu\circ\psi^{-1})({\rm d}z').$$
So 
$$\sup_{f\in\mathcal{F}_1} \cE(f) = \mathcal{E}(\psi) = \int_{\D\times\D} \|z - z'\|^2 (\mu\circ\psi^{-1})({\rm d}z)(\mu\circ\psi^{-1})({\rm d}z').$$
Hence $\psi \in \cS_1$, and since $\mathcal{E}(\zeta\circ\psi) = \mathcal{E}(\psi)$ for any isometry $\zeta:\bbR^p\to\bbR^p$,
$$\{\zeta\circ\psi\,:\,\zeta\in{\rm Isom}(\bbR^p\} \subset \mathcal{S}_1.$$

Now let $f:M\to\bbR^p$ be a function in $\mathcal{F}_1$ so that $\|f(x)-f(x')\| \leq \delta_\M(x,x')$ for any points $x$ and $x'$ in $\M$.
Suppose that $f$ is not an isometry.
Then there exists two points $x$ and $x'$ in $\M$ such that 
$$\|f(x)-f(x')\| < \delta_\M(x,x').$$
By continuity of $f$, there exists a nonempty open subset $U$ of $M\times M$ containing $(x,x')$ such that $\|f(z)-f(z')\| < \delta_\M(z,z')$ for all $(z,z')$ in $U$.
In addition, $\mu(U)>0$ by \eqref{mu-lb}. 
Consequently
\begin{eqnarray*}
\cE(f) & = & \int_{M\times M \setminus U} \|f(x)-f(x')\|^2 \mu({\rm d}x)\mu({\rm d}x') + \int_U\|f(x)-f(x')\|^2 \mu({\rm d}x)\mu({\rm d}x')\\
& < &\int_{M\times M} \delta_\M(x,x')^2 \mu({\rm d}x)\mu({\rm d}x') \\
& = & \sup_{f\in \mathcal{F}_1} \mathcal{E}(f).
\end{eqnarray*}
So any function $f$ in $\cF_1$ which is not an isometry onto its image does not belong to $S_1$.

At last, since for any isometry $f$ in $\mathcal{S}_1$, the map $f\circ\psi^{-1}:\bbR^p\to\bbR^p$ is an isometry, there exists some isometry $\zeta \in {\rm Isom}(\bbR^p)$ such that $f=\zeta\circ \psi$, and we conclude that
$$\{\zeta\circ\psi\,:\,\zeta\in{\rm Isom}(\bbR^p)\} = \mathcal{S}_1.$$
\end{proof}

In conclusion, MVU recovers the isometry when the domain $\D$ is convex.  Note that this is also the case of ISOMAP.

\subsection{Noisy setting} \label{sec:noise}

When the setting is noisy, with noise level $\sigma \ge 0$, $x_1, \dots, x_n$ are sampled from $\mu_\sigma$, a (Borel) probability distribution on $\bbR^p$ with support $\M_\sigma := \bar{B}(\M, \sigma)$, i.e., $M_\sigma$ is composed of all the points of $\mathbb{R}^p$ that are at a distance at most $\sigma$ from $M$. 
To speak of noise stability, we assume that $\mu_\sigma$ converges weakly when $\sigma \to 0$.  Let $\cF_{1,\sigma}$ denote the class of 1-Lipschitz functions on $\M_\sigma$, and so on.  Our simple perturbation analysis is plainly based on the fact that $\cE$ is continuous with respect to the noise level, in the following sense.  This immediately implies that MVU is tolerant to noise. 

\begin{lem}  \label{lem:noise}
Let $\M \subset \bbR^p$ be of positive reach $\rho(\M) > 0$ and assume that $\mu_\sigma \to \mu_0$ weakly when $\sigma \to 0$.  Then as $\sigma \to 0$, we have 
\beq \label{E-noise-conv}
\sup_{f \in \cF_{1,\sigma}} \cE_\sigma(f) \to \sup_{f \in \cF_1} \cE(f), 
\eeq
and
\beq \label{S-noise-conv}
\sup_{f \in \cS_{1,\sigma}} \ \inf_{g \in \cS_1} \ \sup_{x \in \M_\sigma} \ \inf_{z \in \M} \|f(x) - g(z)\| \to 0.
\eeq
\end{lem}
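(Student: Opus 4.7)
The plan is to use the nearest-point projection $\pi_\sigma: \M_\sigma \to \M$, which is well-defined since $\sigma < \rho(\M)$, with Lipschitz constant $L_\sigma \to 1$ as $\sigma \to 0$ by standard estimates on sets of positive reach. Moreover, the line segment joining $x \in \M_\sigma$ to $\pi_\sigma(x)$ lies entirely in $\M_\sigma$ (every point on it stays within distance $\sigma$ of $\pi_\sigma(x) \in \M$), so $\delta_{\M_\sigma}(x, \pi_\sigma(x)) \le \|x - \pi_\sigma(x)\| \le \sigma$. This is the basic geometric input.

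For \eqref{E-noise-conv} I would prove matching asymptotic inequalities. Lower bound: given $f \in \cF_1$, the map $\tilde f_\sigma := f \circ \pi_\sigma$ is $L_\sigma$-Lipschitz on $(\M_\sigma, \delta_{\M_\sigma})$, so $\tilde f_\sigma/L_\sigma \in \cF_{1,\sigma}$. Fixing a Lipschitz extension $\bar f$ of $f$ to $\bbR^p$ (which exists since \lemref{geodist} makes $f$ Lipschitz for the Euclidean metric on the compact set $\M$), the estimate $\|\tilde f_\sigma(x) - \bar f(x)\| = \|\bar f(\pi_\sigma(x)) - \bar f(x)\| \le \|\bar f\|_{\rm Lip}\sigma$ lets us replace $\tilde f_\sigma$ by $\bar f$ in the energy up to an $O(\sigma)$ error, and weak convergence $\mu_\sigma \otimes \mu_\sigma \to \mu_0 \otimes \mu_0$ applied to the fixed continuous functional $(x,x') \mapsto \|\bar f(x) - \bar f(x')\|^2$ then yields $\cE_\sigma(\tilde f_\sigma/L_\sigma) \to \cE(f)$; taking the supremum over $f \in \cF_1$ gives $\liminf_{\sigma \to 0} \sup_{\cF_{1,\sigma}} \cE_\sigma \ge \sup_{\cF_1} \cE$. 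Upper bound: given $f \in \cF_{1,\sigma}$, the restriction $f|_\M$ belongs to $\cF_1$ (because $\delta_{\M_\sigma} \le \delta_\M$ on $\M \times \M$), and the Lipschitz inequality gives $\|f(x) - f|_\M(\pi_\sigma(x))\| \le \sigma$; expanding $\|f(x)-f(x')\|^2$ around $\|f|_\M(\pi_\sigma(x)) - f|_\M(\pi_\sigma(x'))\|^2$ and invoking the same weak-convergence argument produces $\cE_\sigma(f) \le \cE(f|_\M) + O(\sigma) \le \sup_{\cF_1}\cE + O(\sigma)$.

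For \eqref{S-noise-conv} I would argue by contradiction. If the conclusion fails, there exist $\sigma_n \to 0$, $f_n \in \cS_{1,\sigma_n}$ and $\varepsilon > 0$ with $\inf_{g \in \cS_1}\sup_{x \in \M_{\sigma_n}}\inf_{z \in \M}\|f_n(x) - g(z)\| \ge \varepsilon$. After translating $f_n$ so that $f_n(x_0) = 0$ at a fixed $x_0 \in \M$ (energies are translation-invariant), the restrictions $g_n := f_n|_\M$ all lie in the compact class $\cF_1^0$. By \eqref{E-noise-conv}, $\cE_{\sigma_n}(f_n) \to \sup_{\cF_1}\cE$, and the upper-bound argument above gives $|\cE(g_n) - \cE_{\sigma_n}(f_n)| = O(\sigma_n)$, so $\cE(g_n) \to \sup_{\cF_1}\cE$ as well. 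Arzel\`a-Ascoli provides a sup-norm convergent subsequence $g_{n_k} \to g_\infty$, and continuity of $\cE$ on $\cF_1$ (via \eqref{E-lip}) forces $g_\infty \in \cS_1$. Then for any $x \in \M_{\sigma_{n_k}}$,
\[
\inf_{z \in \M}\|f_{n_k}(x) - g_\infty(z)\| \le \|f_{n_k}(x) - f_{n_k}(\pi_{\sigma_{n_k}}(x))\| + \|g_{n_k} - g_\infty\|_\infty \le \sigma_{n_k} + o(1),
\]
contradicting the standing hypothesis.

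The main obstacle is making the weak-convergence step rigorous: the natural extensions $f \circ \pi_\sigma$ depend on $\sigma$, so one cannot test $\mu_\sigma \otimes \mu_\sigma$ against a $\sigma$-dependent integrand directly. The fix is to freeze the integrand by replacing $f \circ \pi_\sigma$ with a $\sigma$-independent Lipschitz extension $\bar f$ of $f$ to $\bbR^p$, absorbing the difference into an $O(\sigma)$ error through $\|x - \pi_\sigma(x)\| \le \sigma$. The secondary technical point is establishing $L_\sigma \to 1$ for the intrinsic metric on $\M_\sigma$, which follows from standard Federer-type estimates on projections onto sets of positive reach together with \lemref{geodist} applied to the thick tubular neighborhoods $\M_\sigma$.
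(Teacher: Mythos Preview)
Your overall strategy matches the paper's: use the metric projection $\pi$ onto $\M$, compare $f\in\cF_{1,\sigma}$ with its restriction $f|_\M\in\cF_1$, and argue by Arzel\`a--Ascoli for the solution convergence. Your proof of \eqref{S-noise-conv} is essentially the paper's. Two points deserve comment.

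First, a simplification you are missing. The projection $\pi$ is \emph{exactly} $1$-Lipschitz from $(\M_\sigma,\delta_{\M_\sigma})$ to $(\M,\delta_\M)$, not merely $L_\sigma$-Lipschitz with $L_\sigma\to 1$: since $\pi$ is Euclidean $1$-Lipschitz (Federer), any curve $\gamma$ in $\M_\sigma$ is mapped to a curve $\pi\circ\gamma$ in $\M$ of no greater length, so $\delta_\M(\pi(x),\pi(x'))\le\delta_{\M_\sigma}(x,x')$. Consequently, for $g\in\cF_1$ the function $g\circ\pi$ is already in $\cF_{1,\sigma}$ with no rescaling, and it is defined on the whole tube $B(\M,\rho(\M))$, hence $\sigma$-independent. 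This removes the need for both the factor $L_\sigma$ and the Kirszbraun extension $\bar f$; the paper's lower bound is simply $\cE_{\sigma}(g\circ\pi)\to\cE(g)$ by weak convergence applied to the fixed integrand $(x,x')\mapsto\|g(\pi(x))-g(\pi(x'))\|^2$.

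Second, and more seriously, your upper bound has a genuine gap. You write that ``invoking the same weak-convergence argument produces $\cE_\sigma(f)\le\cE(f|_\M)+O(\sigma)$,'' but the weak-convergence step only works against a \emph{fixed} integrand, and here $f|_\M$ depends on $f\in\cF_{1,\sigma}$, which itself varies with $\sigma$. What the expansion actually gives is $\cE_\sigma(f)\le\int\|f|_\M(\pi(x))-f|_\M(\pi(x'))\|^2\,d\mu_\sigma d\mu_\sigma+O(\sigma)$, i.e.\ the energy of $f|_\M$ under the pushforward $\nu_\sigma=\pi_*\mu_\sigma$, not under $\mu_0$; weak convergence alone gives no rate and no uniformity to pass from $\nu_\sigma$ to $\mu_0$ here. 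The paper closes this gap exactly the way you do later for \eqref{S-noise-conv}: work along a sequence $\sigma_m\to 0$, pick optimal $f_m\in\cS^0_{1,\sigma_m}$, extract via Arzel\`a--Ascoli a supnorm limit $g_\star\in\cF_1^0$ of $f_m|_\M$, and then apply weak convergence to the \emph{fixed} function $g_\star\circ\pi$ to obtain $\cE_{\sigma_m}(f_m)\to\cE(g_\star)\le\sup_{\cF_1}\cE$. You already have this compactness machinery in your proposal; you just need to invoke it earlier, in the upper half of \eqref{E-noise-conv}, rather than trying to get a pointwise $O(\sigma)$ bound that does not hold under the bare weak-convergence hypothesis.
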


\begin{proof}
The metric projection $\pi: B(\M, \rho(\M)) \to \M$ with $\pi(x) = \argmin\{\|x - x'\|: x' \in \M\}$, is well-defined and 1-Lipschitz \citep[Th.~4.8]{MR0110078}.  

Consider any sequence $\sigma_m \to 0$ with $\sigma_m < \rho(\M)$ for all $m \ge 1$, and let $f_m \in \cS^0_{1,\sigma_m}$.  Let $g_m$ denote the restriction of $f_m$ to $\M$.  Since $(g_m) \subset \cF^0_1$ and $ \cF^0_1$ is compact for the supnorm, it admits a convergent subsequence.  Assume $(g_m)$ itself is convergent, without loss of generality.  Then $g_{m} \to g_\star$, with $g_\star \in \cF_1^0$.  For $x \in B(\M, \rho(\M))$, define $f_\star(x) = g_\star(\pi(x))$.  Then for $x \in \M_{\sigma_m}$, we have
\beqn
\|f_\star(x) - f_{m}(x)\| &\le& \|g_\star(\pi(x)) - g_{m}(\pi(x))\| + \|f_{m}(\pi(x)) - f_{m}(x)\| \\
&\le& \|g_\star - g_{m}\|_\infty + \|\pi(x) - x\| \\
&\le& \|g_\star - g_{m}\|_\infty + \sigma_m,
\eeqn
since $f_{m} \in \cF_{1,\sigma_m}$ and the segment $[\pi(x), x] \subset \M_{\sigma_m}$.  The latter is due to $\|\pi(x) - x\| \le \sigma_m$ and $B(\pi(x), \sigma_m) \subset \M_{\sigma_m}$, both by definition.  Hence, as functions on $\M_{\sigma_m}$, we have $\|f_\star(x) - f_{m}(x)\|_\infty \to 0$, i.e.,
$$\sup_{x\in \M_{\sigma_m}} \|f_\star(x) - f_{m}(x)\| \to 0.$$
By \eqref{E-lip}, again applied to functions on $\M_{\sigma_m}$ for a fixed $m$, we have
\begin{eqnarray*}
\big|\cE_{\sigma_m}(f_m) - \cE_{\sigma_m}(f_\star)\big| & \leq & 4\|f_\star(x) - f_{m}(x)\|_\infty {\rm diam}(M_{\sigma_m}) \\
& \leq & 4\|f_\star(x) - f_{m}(x)\|_\infty {\rm diam}(B(M,\rho(M)))\\
&  \to & 0,
\end{eqnarray*}
and since $f_\star$ does not depend on $m$ and is bounded, we also have
\beq \label{noise-proof1}
\cE_{\sigma_m}(f_\star) \to \cE(f_\star) = \cE(g_\star) \le \sup_{\cF_1} \cE.
\eeq
Hence
\begin{eqnarray*}
\sup_{\cF_{1,\sigma_m}} \cE_{\sigma_m}   & = & \cE_{\sigma_m}(f_m) \\
& = & \cE(f_\star) + \cE_{\sigma_m}(f_\star)  - \cE(f_\star) + \cE_{\sigma_m}(f_m) -  \cE_{\sigma_m}(f_\star)\\
& \leq & \sup_{\cF_1} \cE + \cE_{\sigma_m}(f_\star)  - \cE(f_\star) + \cE_{\sigma_m}(f_m) -  \cE_{\sigma_m}(f_\star),
\end{eqnarray*}
and we deduce that 
\[
\varlimsup_{m \to \infty} \sup_{\cF_{1,\sigma_m}} \cE_{\sigma_m} \le \sup_{\cF_1} \cE,
\]
and since this is true for all sequences $\sigma_m \to 0$ (and $m$ large enough), we have 
\[
\varlimsup_{\sigma \to 0} \sup_{\cF_{1,\sigma}} \cE_{\sigma} \le \sup_{\cF_1} \cE.
\]

For the reverse relation, choose $g \in \cS_1$ and for $x \in B(\M, \rho(\M))$ define $f(x) = g(\pi(x))$.
As above, let $\sigma_m\to 0$ with $\sigma_m \leq \rho(M)$.
Then $f \in \cF_{1, \sigma_m}$ by composition, so that
\[
\cE_{\sigma_m}(f) \le \sup_{\cF_{1,\sigma_m}} \cE_{\sigma_m}.
\]
On the other hand, 
\[
\cE_{\sigma_m}(f) \to \cE(f) = \cE(g) = \sup_{\cF_1} \cE.
\]
Hence,
\[
\sup_{\cF_1} \cE \le \varlimsup_{\sigma \to 0} \sup_{\cF_{1,\sigma}} \cE_{\sigma}.
\]
This concludes the proof of \eqref{E-noise-conv}.

Equation \eqref{S-noise-conv} is now proved based on \eqref{E-noise-conv} in the same way \eqref{S-conv} is proved based on \eqref{E-conv}, by contradiction.
To be sure, assume \eqref{S-noise-conv} is not true.  Then it is also not true for $\cS^0_{1,\sigma}$ and $\cS^0_1$.  Hence, there is $\eps > 0$, a sequence $\sigma_m \to 0$ and $f_m \in \cS^0_{1,\sigma_m}$ such that 
\[
\inf_{g \in \cS^0_1} \ \sup_{x \in \M_{\sigma_m}} \ \inf_{z \in \M} \|f_m(x) - g(z)\| \ge \eps,
\]
for infinitely many $m$'s.  Without loss of generality, we assume this is true for all $m$.  For each $m$, let $g_m$ be the restriction of $f_m$ to $\M$.  Then, taking a subsequence if needed, $g_m \to g_\star\in \cF^0_1$ in supnorm.  As before, define $f_\star(x) =  g_\star(\pi(x))$ for $x \in B(\M, \rho(\M))$.  Following the same arguments, we have 
\[
\sup_{x \in \M_{\sigma_m}} \|f_\star(x) - f_m(x)\| \to 0.
\]
We also see that, necessarily, $g_\star \in S^0_1$, for otherwise the inequality in \eqref{noise-proof1} would be strict and this would imply that \eqref{E-noise-conv} does not hold.  
Hence
\[
\sup_{x \in \M_{\sigma_m}} \|f_\star(x) - f_m(x)\| \ge \sup_{x \in \M_{\sigma_m}} \ \inf_{z \in \M} \|f_m(x) - g_\star(z)\| \ge \inf_{g \in \cS^0_1} \sup_{x \in \M_{\sigma_m}} \ \inf_{z \in \M} \|f_m(x) - g(z)\|.
\]
This leads to a contradiction.  Hence the proof of \eqref{S-noise-conv} is complete.
\end{proof}

\subsection{Inconsistencies} \label{sec:inconsistent}

We provide two emblematic situations where MVU fails to recover $\D$.  They are both consequences of MVU's robustness to noise.  In both cases, we consider the simplest situation where $\M = \D \subset \bbR^2$ and $\mu$ is the uniform distribution.  Note that $\psi$ is the identity function in this case, i.e., $\psi(x) = x$, and the Isometry Assumption is clearly satisfied. We use the same notation as in \secref{noise} and let $\mu_\sigma$ denote the uniform distribution on $\M_\sigma$.  

\bigskip
\noindent {\bf Nonconvex without holes.}  Suppose $\M_0 \subset \bbR^2$ is a curve homeomorphic to a line segment, but different from a line segment, and for $\sigma > 0$, let $\M_\sigma$ be the (closed) $\sigma$-neighborhood of $\M_0$.  We show that there is a numeric constant $\sigma_0 > 0$ such that, when $\sigma < \sigma_0$, $\psi$ does not maximize the energy $\cE_\sigma$.  To see this, we utilize \lemref{noise} to assert that $\cS_{1, \sigma} \to \cS_{1,0}$ in the sense of \eqref{S-noise-conv}, and that $\psi \notin \cS_{1,0}$, because $\cS_{1,0}$ is made of all the functions that map $\M$ to a line segment isometrically.  So there is $\sigma_0 > 0$ such that $\psi \notin \cS_{1, \sigma}$ for all $\sigma < \sigma_0$.  This also implies that no rigid transformation of $\bbR^2$ is part of $\cS_{1, \sigma}$.  If we now let $\D = \M = \M_\sigma$ for some $0 < \sigma < \sigma_0$, we see that we do not recover $\D$ up to a rigid transformation.

\bigskip
\noindent {\bf Convex boundary and convex hole.}  Let $K_a$ denote the axis-aligned ellipse of $\bbR^2$ with semi-major axis length equal to $a$ and perimeter equal to $2\pi$.  Note that, necessarily, $1 \le a < \pi/2$, with the extreme cases being the unit circle ($a = 1$) and the interval $[-\pi/2,\pi/2]$ swept twice ($a = \pi/2$).  Denote by $b = b(a)$ the semi-minor axis length of $K_a$, implicitly defined by
\[
\int_0^{2\pi} \sqrt{a^2 \sin^2 t + b^2 \cos^2 t} \, {\rm d}t = 2\pi.
\]
We have
\[
F(a) := \int_{K_a} \|x\|^2 {\rm d}x = \int_0^{2\pi} \big(a^2 \cos^2 t + b^2 \sin^2 t\big) \sqrt{a^2 \sin^2 t + b^2 \cos^2 t} \, {\rm d}t. 
\] 
This daunting expression is much simplified when $a = 1$, in which case it is equal to $2\pi$, and when $a = \pi/2$, in which case it is equal to $\pi^2/12$.  Since the former is larger than the latter, and $F$ is continuous in $a$, there is $a_\star$ such that, for $a > a_\star$, $F(a) < F(1)$.  (We actually believe that $a_\star = 1$.)  

Fix $a \in (a_\star, \pi/2)$ and let $\M_0 = K_a = \phi^{-1}(K_1)$, where $\phi: \bbR^2 \to \bbR^2$ sends $x = (x_1, x_2)$ to $\phi(x) = (x_1/a,  x_2/b)$.  Note that $K_1$ is the unit circle.  
By the previous calculations and our choice for $a$, the identity function $\psi$ is not part of $\cS_{1,0}$, since
\[
\cE_0(\psi) = \frac1\pi \int_{\M_0} \|x\|^2 {\rm d}x = \frac1\pi F(a) < \frac1\pi F(1) = 2 = \frac1\pi \int_{\M_0} \|\phi(x)\|^2 {\rm d}x = \cE_0(\phi). 
\]  
As before, let $\M_\sigma$ be the (closed) $\sigma$-neighborhood of $\M_0$.  Again, there is a numeric constant $\sigma_0 > 0$ such that, when $\sigma < \sigma_0$, $\psi$ does not maximize the energy $\cE_\sigma$, and we conclude again that if $\D = \M = \M_\sigma$, MVU does not recover $\D$ up to a rigid transformation.

\section{Discussion} \label{sec:discussion}

We leave behind a few interesting problems.
\bitem
\item {\em Convergence rate for the solution(s).}  We obtained a convergence rate for the energy in \thmref{quant}, but no corresponding result for the solution(s).  Such a result necessitates a fine examination of the speed at which the energy decreases near the space of maximizing functions.  

\item {\em Flattening property of MVU.}  Assume that $\M$ satisfies the Isometry Assumption.  Though we showed that MVU is not always consistent in the sense that it may not recover the domain $\D$ up to a rigid transformation, we believe that MVU always flattens the manifold $\M$ in this case, meaning that it returns a set $\S$ which is a subset of some $d$-dimensional affine subspace.  If this were true, it would make MVU consistent in terms of dimensionality reduction!

\item {\em Solution space in general.}  As pointed out by \cite{pap}, and as we showed in \thmref{conv}, characterizing the solutions to Continuum MVU is crucial to understanding the behavior of Discrete MVU.  In \thmref{convex}, we worked out the case where $\M$ is isometric to a convex set.  What can we say when $\M$ is isometric to a sphere?  Is MVU able to recover this isometry?  This question is non-trivial even when $\M$ is isometric to a circle.  In fact, showing that the energy over ellipses (of same perimeter) is maximized for a circle is not straightforward, as seen in \secref{inconsistent}.
\eitem

\subsection*{Acknowledgements}
This work was partially supported by a grant from the National Science Foundation (NSF DMS 0915160) and by a grant from the French National Research Agency (ANR 09-BLAN-0051-01).

\bibliographystyle{chicago}
\bibliography{mvubib}

\end{document}